\DeclareMathOperator*{\argmin}{arg\,min}
\newcommand{\AD}[1]{{\color{red} {\bf AD:} #1}}
\newcommand{\AM}[1]{{\color{blue} {\bf AM:} #1}}
\begin{document}

\mainmatter              
\title{Energy-Aware Coverage Planning for Heterogeneous Multi-Robot System}
\titlerunning{Energy-Aware Coverage}  
%
\author{Aiman Munir$^{1}$ \and Ayan Dutta$^{2}$  \and  Ramviyas Parasuraman$^{1}$ }
\authorrunning{Munir et al.} 
%
\tocauthor{Aiman Munir, Ayan Dutta, Ramviyas Parasuraman}

\institute{$^{1}$ School of Computing, University of Georgia, Athens, GA 30602, USA,\\
$^{2}$ School of Computing, University of North Florida, Jacksonville, FL 32224, USA,\\
Corresponding author email: \email{ramviyas@uga.edu},\\ 
Codes, Videos, and Appendix: \url{https://github.com/herolab-uga/energy-aware-coverage}
}

\maketitle              

\begin{abstract}
We propose a distributed control law for a heterogeneous multi-robot coverage problem, where the robots could have different energy characteristics, such as capacity and depletion rates, due to their varying sizes, speeds, capabilities, and payloads. Existing energy-aware coverage control laws consider capacity differences but assume the battery depletion rate to be the same for all robots. In realistic scenarios, however, some robots can consume energy much faster than other robots; for instance, UAVs hover at different altitudes, and these changes could be dynamically updated based on their assigned tasks. Robots' energy capacities and depletion rates need to be considered to maximize the performance of a multi-robot system. To this end, we propose a new energy-aware controller based on Lloyd's algorithm to adapt the weights of the robots based on their energy dynamics and divide the area of interest among the robots accordingly. The controller is theoretically analyzed and extensively evaluated through simulations and real-world demonstrations in multiple realistic scenarios and compared with three baseline control laws to validate its performance and efficacy.
\keywords{Sensor Coverage, Heterogeneous Multi-Robot Systems, Energy-Awareness}
\end{abstract}

\section{Introduction}
    
There has been an increase in the use of autonomous robots in recent years, particularly for the purpose of surveillance and monitoring environments. 
The collected data from the robots can be used to make further decisions. For example, in a precision agriculture application, robot-collected hyper-spectral images might be used for weed localization and treating the affected areas with herbicides~\cite{tokekar2016sensor}. 
For such applications, sensor (or spatial) coverage is an important computational problem to consider. In a multi-robot sensor coverage problem, the objective is to distribute the robots (sensors) in a manner that optimally monitors the workspace covering the spatial region (environment) with at least one robot's sensor footprint. This objective is different from and is not to be confused with the area coverage problem, where the objective is that the robot(s) should visit each and every part of the environment. 
A promising technique for achieving the sensor coverage goal is to divide the environment into regions using Lloyd's algorithm, also known as Voronoi partitions, which can be optimized based on constraints such as uncertainties in sensing and sensor health, among others~\cite{santos2018coverage,shi2020,pierson2016adaptivetrustweighting,arslan2016}. 


A heterogeneous group of robots (e.g., a group of UGVs and UAVs) allows the system to develop proficiency in different areas of the task despite their inherent limitations on a specific capability, such as sensing or mobility \cite{rudolph2021range,cai2021non}. UAVs, for instance, can cover more terrain and are less susceptible to obstacles than UGVs, which, on the other hand, generally have higher battery capacities and deplete energy at a slower rate than UAVs~\cite{yu2019coverage}. 
We posit that the robot's energy depletion rate is an important factor that needs to be incorporated into their controllers to increase the overall lifetime of the multi-robot system and optimize the mission objective.

\begin{figure}[t]
    \centering
    \begin{tabular}{cc}
        \hspace{-0.3in}\includegraphics[width=0.45\linewidth]{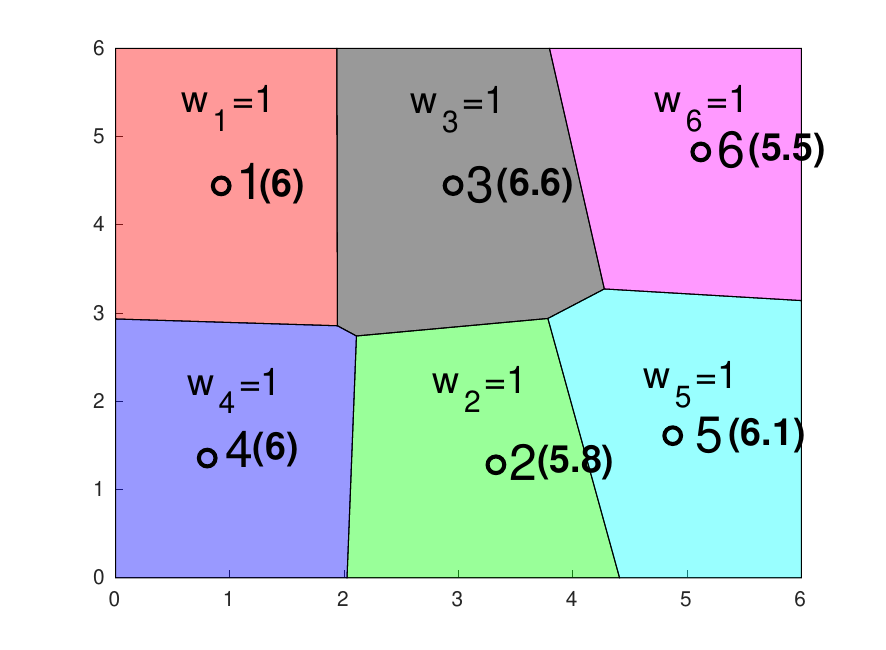}&
        \hspace{-0.1in}\includegraphics[width=0.45\linewidth]{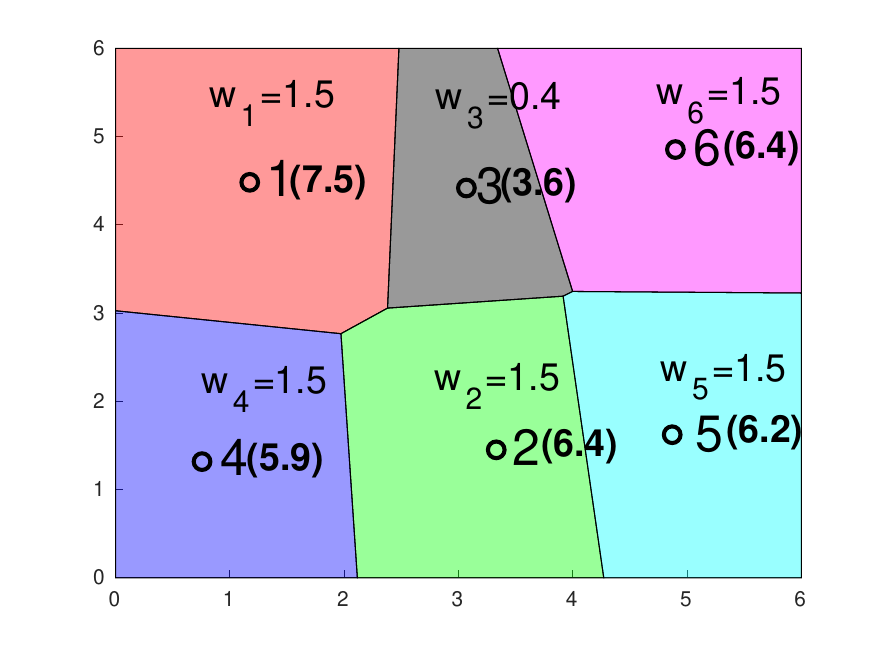} 
    \end{tabular}
    \vspace{-4mm}
    \caption{Regions assigned by standard Voronoi partitioning (left) and the proposed energy-aware controller (EAC) (right). The robots have the same initial battery level. However, robot 3's depletes its energy three times faster than the other robots. Therefore, robot 3's EAC-assigned region area (area in $m^2$ in parentheses) is less than other robots by adapting the weights ($w_i$) based on the ratio of energy depletion rate between the robots.
    }
    \label{fig:illust}
    \vspace{-4mm}
\end{figure}

The rate of energy depletion could vary significantly between robots due to their sensor payloads (e.g., some robots may be equipped with high-power thermal cameras or power drill tools in search and rescue applications), as well as their velocities (e.g., UAVs operating at higher altitude will deplete energy faster than the ones at lower altitudes) \cite{parasuraman2014model,mei2004energy,wei2020energy}. 
Moreover, the energy depletion rate of a robot can change dynamically over time (e.g., activating or deactivating a payload, increasing the velocity, etc.), and the robots need to adjust their coverage loads accordingly. Fig.~\ref{fig:illust} illustrates such a scenario. 
Here, the energy-aware controller optimally assigns a lower area to robot 3, which had almost four times higher energy depletion rate compared to the other robots, which will limit its ability to cover more area. 
Assigning a larger area to a robot with more energy or a lower depletion rate and smaller areas to other robots will distribute the workload among robots and ensure effective coverage and timely execution of tasks. Energy availability has significant implications for coverage planning. Several works proposed in the literature \cite{kim2022,kwok2007energy,derenick2011energy,duca2020multi} focusing on this problem consider limiting the robot velocity and/or partitioning the regions based on the current energy levels. However, this will result in poor coverage quality when robots have different energy depletion rates. 
Moreover, as discussed earlier, real-world applications might require a heterogeneous group of robots with different and dynamic energy characteristics, where robots consume energy differently.
Moreover, restricting the robots' velocities like performed in \cite{kwok2007energy,duca2020multi} could conflict with the low-level, time-limited tasks the robots are assigned to.

To this end, we present a novel distributed multi-robot controller that holistically considers the robot's energy capacity and depletion characteristics of other robots for optimal energy-aware coverage planning. 
We extensively validate the controller in simulations (scaling up to $100$ robots) and with real robots within an in-house swarm robotics testbed (see the video). The main contributions of our paper are as follows.
\begin{itemize}
    \item To the best of our knowledge, this is the first study that considers differences in the energy capability of the robots with heterogeneous energy depletion rates for effective multi-robot coverage planning.
    \item Unlike prior studies, we consider that the robots might have time-varying energy depletion rates during a mission. Compared to the state-of-the-art relevant algorithms, our proposed Energy-Aware Controller (EAC) significantly reduces coverage cost and achieves energy-balancing coverage objectives. 
\end{itemize}

\section{Related Work}
Sensor coverage refers to the optimal placement of robots (or sensors) in an environment that ensures maximum coverage such that detection or tracking of relevant, important features and landmarks can be achieved \cite{santos2021multi,wei2021multi}. Centroid Voronoi tessellation~\cite{du1999centroidal} is an extensively studied technique for sensor coverage. Distributed controllers seeking this objective are proven to be scalable since they only consider the local neighborhood. One of the pioneering works in this domain is due to~\cite{cortes2004coverage}, which presents a Lloyd-algorithm-based distributed controller that drives the robotic sensors toward the centroids of their Voronoi partitions. Extensions of this controller have been presented in different settings such as time-varying intensity~\cite{cortes2010coverage} and limited communication range~\cite{laventall2009coverage}. Recursive partitioning of Voronoi partitions based on discovering an obstacle-free area in an unknown environment by a group of mobile robots is studied in \cite{dutta2020multi}. However, these algorithms assume that the robots are homogeneous with similar characteristics. Several studies in the literature formulate control laws for the coverage of heterogeneous robots. Pimenta et al.~\cite{pimenta2008sensing} consider heterogeneity in the sensing radius of the robots. Marier et al. ~\cite{Marier2013} have utilized sensor health as a weight metric -- a higher weight corresponds to a larger area coverage responsibility. Pierson et al.~\cite{pierson2016adaptivetrustweighting} learn the difference in robot sensing performance and adapt the weights online to assign larger areas to robots with better sensor health. 
A variation of this work is presented in \cite{pierson2015adapting} to adapt weights according to relative actuation errors. 

It is vital to consider the energy limitations of the robots relative to their neighborhood to increase the lifespan of the robot group \cite{setter2016energy, wei2020energy}. 
To deal with energy-related heterogeneity, Kwok and Martinez \cite{kwok2007energy} proposed an energy-aware variant of Lloyd's algorithm that uses a power diagram to balance the robot network's energy depletion and assign areas of higher importance to robots with higher energy levels while restricting the movements of robots with less energy. 
In \cite{duca2020multi}, the authors extended this idea to scenarios where the importance of the environmental distribution is time-varying. Morraef and Rodrigues~\cite{moarref2014optimal} ensure the maintenance of robots' energy levels by considering coverage as an optimal control problem while using the velocity of the robots as a weight metric. 
This controller assigns larger areas to the robots with higher speeds to ensure faster response.
Derenick et al.~\cite{derenick2011energy} studied the problem of maintaining persistent coverage by deriving control laws that allow robots with depleted batteries to reach corresponding access points (docking) for recharging stations. 
Authors in \cite{kim2022} explore heterogeneity in robot speed capability and propose a velocity-adaptive control law that partitions regions among robots based on the maximum velocities of robots to facilitate rapid response to time-sensitive applications. 
Limiting the velocities is a good approach in cases where robots need to optimize their velocity based on environmental constraints or to speed up the mission efficiency. 
However, adapting the partition weights based on velocities could be ignorant to the dynamics of energy consumption, which is influenced by various factors, such as computation, sensor power and frequency of operation, mechanical payload, communication range, altitude of operation, resource supply, etc.  \cite{lin2018energy,latif2021energy,coffey2023}. 

\section{Methodology}
Formally, the problem here is to devise a position controller that would balance the coverage area in a heterogeneous group of robots based on the robots' energy characteristics and optimize the locational (or geometric configuration) cost of the coverage. 
For instance, robots with higher energy depletion rates should be given less area to cover or monitor because they will quickly drain their batteries. 


\noindent
\textbf{Assumptions:} We assume that the robots do not have an accurate model of time-varying energy depletion rates, but they can instantaneously measure their current energy levels (ubiquitously available in robots in terms of battery voltage or percentage). 
We further assume that the robots form a connected network to share data.

\noindent
\textbf{Environment Partition. }
Let $R=\{1,2,\cdots n\} \in \mathbb{R}^n$ denote the set of $n$ robots, forming a connected graph $\mathcal{G} = (R,E)$ defined by their edges $E = \{(i,j) \}$ that share a connection link and $\mathcal{N}_i = \{j \in R | (i,j) \in E \}$ is the neighbor set of robot $i$ in the graph $\mathcal{G}$. The cardinality of the set $|\mathcal{N}_i| \in \mathbb{N}$ represents the number of neighbors for robot $i$.
The goal of the robots is to collectively cover a convex environment $\mathcal{Q} \subset \mathbb{R}^2$.  Let $p_i$ denote the position of robot $i$ and $q$ represent a point in $\mathcal{Q}$. Prior studies on partitioning robots for coverage have popularly used Voronoi partitioning~\cite{voronoi1908news}. Let $\mathcal{V}_i$ denote the partition for robot $i$, which can be calculated as the following. 
\begin{equation}
\mathcal{V}_i=\left\{q \in \mathcal{Q} \mid\left\|q-p_i\right\| \leq\left\|q-p_j\right\|, \forall j \neq i, j\in R\right\}
\end{equation}
A locational cost of the current partitions can be obtained as
\begin{equation}
H_{V}(p_1, ..., p_n) = \sum_{i=1}^n \int_{V_i} \frac{1}{2} ||q - p_i||^2 \phi(q) dq ,
\label{eq: tarditionalLLoydCost}
\end{equation}
where $\phi(q)$ is a density function $\mathcal{Q} \to R>0$ to describes the
importance of a given point $q$ \cite{cortes2004coverage}. In the coverage control literature, it is typical to operate under the assumption that the density function is known $\phi$ \cite{gosrich2022} (while some works try to learn the density before (or simultaneously) achieving coverage \cite{santos2021multi,munir2023exploration}). The traditional Lloyd-based coverage control algorithm minimizes the locational optimization cost in Eq.~\eqref{eq: tarditionalLLoydCost} by moving the robots toward the centroid of their Voronoi regions.
To incorporate heterogeneity in the robots, we will follow a weighted Voronoi partitioning (sometimes called power diagram) as mentioned in \cite{kwok2007energy,pierson2016adaptivetrustweighting}. The weighted partition $\mathcal{W}_i$ for a robot $i$ with a weight $w_i$ can be calculated as 
\begin{equation}
\mathcal{W}_i=\left\{q \in \mathcal{Q} \mid\left\|q-p_i\right\|^2-w_i \leq\left\|q-p_j\right\|^2-w_j,  \forall j \neq i\right\} ,
\label{eq: powerDiagram}
\end{equation}
The locational cost function for the power diagram will be 
\begin{equation}
H_{\mathcal{W}}(\textbf{p,w}) = \sum_{i=1}^n \int_{\mathcal{W}_i} \frac {1}{2}(||q - p_i||^2 - w_i) \phi(q) dq ,
\label{eq: weightedCost}
\end{equation}
where \textbf{p} = $\{p_1, p_2,.., p_n\}$ and \textbf{w} = $\{w_1, w_2,.., w_n\}$ are the vectors with all robot positions and weights, respectively.


It has been proven in the literature \cite{cortes2010coverage,pierson2015adapting,santos2018coverage} that the robots (or mobile sensors) should converge their positions at the centroid of their respective weighted Voronoi region ($C_{\mathcal{W}_i}$) to obtain the minimum locational cost in Eq.~\eqref{eq: weightedCost}, i.e., if the robots follow the gradient (partial derivative) of the cost function with respect to their positions, then they would eventually reach an equilibrium point where $C_{\mathcal{W}_i} = \argmin_{p_i} H_{\mathcal{W}_i}(p_i)$.
Therefore, the velocity controller for the robots, according to their weighted partitions, is given by 
\begin{equation}
\dot{p}_i = - k_p\left(C_{\mathcal{W}_i}-p_i\right) , \\
\label{eq: position controller}
\end{equation}
\begin{equation}
\text{where } C_{\mathcal{W}_i} = \frac{1}{M_{\mathcal{W}_i}} \int_{\mathcal{W}_i} q \phi(q) dq  \text{ and } M_{\mathcal{W}_i}  = \int_{\mathcal{W}_i}  \phi(q) dq .
\label{eq:centroid-mass}
\end{equation}

Here, $k_p \geq 0$ is the controller gain, which impacts the convergence rate. 
This control law places the robots in optimal locations for sensor coverage of the environment as long as the weights of the robots are adapted to satisfy the objectives of sensing \cite{pierson2016adaptivetrustweighting,santos2018coverage}, performance \cite{pierson2015adapting}, or energy heterogeneity \cite{kwok2007energy}. 
Our study focuses on adapting the weights $w_i$ based on the energy depletion rate, where the robots with better energy capability get assigned to larger partitions. 


\subsection{Energy Consumption Model}
\label{sec:energymodel}

Let $E_i(t)$ represent the current energy level of a robot $i$ at an instant $t$. 
According to the standard energy consumption model of mobile robots \cite{parasuraman2014model,mei2004energy}, three key parameters govern a robot's energy expenditure: current energy level (or the initial reserve), temporal energy cost (time-dependent costs such as the energy consumed by robot's computers and sensors even when the robot does not move), and spatial energy cost (mobility-dependent costs which characterize how energy depletes when robot moves with some velocity). Formally, we define the discretized energy consumption model as
\begin{align}
E_i(t) &= E_i(t-1) - \dot{E}_i(t)\\ 
\dot{E}_i(t) &=  - \alpha(t) - \beta(t) . |v_i(t)| 
\label{eq.ener_time}
\end{align}
Here, $\alpha(t)$ and $\beta(t)$ are the temporal and spatial energy depletion coefficients (potentially time-varying), respectively. The depletion coefficients need not be the same for all the robots. Assuming $\delta t$ is the time-lapse between the iterations and $\delta x$ is the displacement of the robot between the iterations, the velocity of the robot $v_i(t)$ is given by $\frac{\delta x}{\delta t}$. 
The energy depletion rate $\dot{E}_i(t)$ would vary depending on the heterogeneity in the system. 
Here, the values of $\alpha(t)$ and $\beta(t)$ together determine the energy depletion rate. Typically, they would not vary with time for a given robot unless it actively changes the behavior of how it consumes energy (e.g., hovering at a higher altitude, activating a new sensor payload, etc.). Therefore, with slight abuse of notation, we refer to $\alpha = \alpha(t), \beta = \beta(t)$.
For example, the $\alpha(t)$ and $\beta(t)$ of a drone will be multiple times higher than that of a ground vehicle, thus depleting energy much faster. Similarly, a ground robot with a 3D LIDAR and a camera will likely have higher $\alpha(t)$ than a ground robot with only a camera. 
In some robots, the mobility factor $\beta(t)$ is more significant than the temporal factor $\alpha(t)$. 
It is worth noting that the depletion rate has been largely ignored in the multi-robot literature, where planning is done based on the robot's velocity or path length (as a proxy for energy).

\subsection{Distributed Weight Adaptation Law}
We consider heterogeneity in energy consumption rates for adaptively estimating robot weights in partitioning the environment into $n$ Voronoi cells.
Accordingly, we propose a novel distributed energy-aware weight adaptation controller (EAC) that takes into account both the initial energy level and depletion rate of the robot.
\begin{equation}
   \dot{w}_i^{EAC} = -\frac{k_w}{M_{\mathcal{W}_i}}  \sum_{j \in \mathcal{N}_i}   
   \left( \frac{w_i}{w_j} - \frac{E_i^{init}}{E_j^{init}} . \frac{\dot{E}_j(t)}{\dot{E}_i(t)} \right)  
\label{eq: weightAdaptation_EAC}
\end{equation}

Here, $k_w$ is a positive gain constant depending on the environment size, $M_{\mathcal{W}_i}$ is the mass of the weighted Voronoi cell of robot $i$ (Eq.~\eqref{eq:centroid-mass}), and $E_i^{init}$ is the initial energy level of a robot $i$.
The depletion rate $\dot{E}_i$ of a robot at any instance will be a positive value as long as the robot remains powered, i.e., $\dot{E}_i(t) > 0$ as the $\alpha >0, \beta \geq 0$  (there will always be some energy consumed by the robot's computing unit even in idle state). We also assume there is no energy re-generation (back energy flow) during robot operation. The weight adaptation of the controller $\dot{w}_i^{EAC}$ is designed to balance the ratio of the weights with the neighbors by considering the direct ratio between the energy capacity of the robot and its neighbor and the inverse ratio between the energy depletion rate of the robot and its neighbor. Over time, the robot with a higher energy capacity will be assigned more weight, thereby more coverage area in the partitioning. Similarly, the robot with a higher energy consumption rate will get less weight to cover a smaller area.
The proposed distributed approach follows the below procedure. First, we find the weighted Voronoi partitions $\mathcal{W}_i$ based on the initial weights that are initialized to 1 for all robots $i \in R$. The energy depletion rates $\dot{E}_i(t)$ are calculated online once the robots start moving following Eq.~\eqref{eq.ener_time} and the corresponding change in Voronoi cell weights are calculated using Eq. \eqref{eq: weightAdaptation_EAC}. Next, for each such partition $\mathcal{W}_i$, its centroid $C_{\mathcal{W}_i}$ is calculated. The position control law in Eq.~\eqref{eq: position controller} is applied to find the velocities the robots should follow from their current locations $p_i$ to the centroid $C_{\mathcal{W}_i}$. The algorithm terminates if all the robots reach within $\epsilon$ distance from the centroid or if any of the robot's remaining energy at time $t$ is less than a threshold to indicate that the robot might not have enough battery power left for further operation.

Algorith ~\ref{algo:eac} provides a summary of the proposed energy-aware controller algorithm distributed in each robot.

\begin{algorithm}[t]
\KwIn{
$n$ robots, their positions, and $E_{init}$ (initial energy level or battery capacities).\\
$\epsilon, \delta:$ two small positive constants.\\
 }
 \KwOut{
    Energy-aware weighted region partitions.
 }
 \While{not converged}{
  \For{Each robot $r$} 
    {
    Find weighted Voronoi partition $W_r$ (Eq.~\eqref{eq: powerDiagram}). \\
    Find the centroid $C_r$ of $W_r$.\\
    Apply the position controller $\dot{p}_r$ given in Eq \eqref{eq: position controller}.\\    
    Get information on neighbors' energy and energy depletion rate. \\
    \If{ $\exists\dot{E}_i(t) - \dot{E}_i(t-1) > 0.2$}
    {
        $E_{init} = E_{current}$
    }
    Apply the energy-aware weight adaptation controller $\dot{w}_r$ from Eq. \eqref{eq: weightAdaptation_EAC} 
    
    Update $w_r$ and $p_r$
    
    }
    \If{$(\exists E_i(t) < \delta) || (C_i - p_i \leq \epsilon, \forall C_i, p_i)$}{
      convergence = True;\\
      return the energy-aware weighted region partitions.
   }
 }
\caption{Energy-Aware Coverage (EAC)}
\label{algo:eac}
\end{algorithm}





\begin{theorem}
Applying the distributed energy-aware coverage approach with the weight adaptation controller in Eq.~\eqref{eq: weightAdaptation_EAC} to robots following the energy dynamics Eq.~\eqref{eq.ener_time} will asymptotically reach a balanced ratio of the weights $w_i$ based on the energy depletion rate ratio, i.e., 
\begin{equation}
(w_i / w_j) \rightarrow (\dot{E_j} / \dot{E_i}) \quad \forall i,j    
\label{eq: weightController_Theorem1}
\end{equation} 
\label{thm: weightlaw}
\end{theorem}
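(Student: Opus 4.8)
The plan is to recast the weight-ratio balancing objective as a \emph{consensus} problem and then invoke a standard Lyapunov argument for Laplacian dynamics. Since $\dot E_i(t)>0$ and the energy model treats $\alpha,\beta$ (hence $\dot E_i$) as constant for each robot between behavior changes, I would first introduce the auxiliary variable $\psi_i := w_i\,\dot E_i / E_i^{\mathrm{init}}$. The equilibrium of Eq.~\eqref{eq: weightAdaptation_EAC} is $w_i/w_j = (E_i^{\mathrm{init}}\dot E_j)/(E_j^{\mathrm{init}}\dot E_i)$ for every edge $(i,j)$ --- which reduces to the stated relation Eq.~\eqref{eq: weightController_Theorem1} when the initial energies coincide --- and this is exactly $\psi_i=\psi_j$ on every edge; by connectivity of $\mathcal G$ it is equivalent to $\psi_1=\cdots=\psi_n$.

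Next I would differentiate $\psi_i$ along the controller. Multiplying Eq.~\eqref{eq: weightAdaptation_EAC} by $\dot E_i/E_i^{\mathrm{init}}$ and using $\tfrac{\dot E_i}{E_i^{\mathrm{init}}}\tfrac{w_i}{w_j}=\tfrac{\psi_i}{w_j}$ together with $\tfrac{\dot E_i}{E_i^{\mathrm{init}}}\tfrac{E_i^{\mathrm{init}}\dot E_j}{E_j^{\mathrm{init}}\dot E_i}=\tfrac{\psi_j}{w_j}$ yields
\[
\dot\psi_i \;=\; -\sum_{j\in\mathcal N_i} a_{ij}\,(\psi_i-\psi_j), \qquad a_{ij} \;:=\; \frac{k_w\,\dot E_i}{M_{\mathcal W_i}\,E_i^{\mathrm{init}}\,w_j} \;>\; 0 .
\]
This is a time-varying weighted Laplacian flow $\dot\psi=-L(t)\psi$ on the connected graph $\mathcal G$ with strictly positive edge weights. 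For such a flow the non-smooth function $V(\psi)=\max_i\psi_i-\min_i\psi_i$ is non-increasing along trajectories, and LaSalle's invariance principle (in its nonautonomous form, or equivalently the standard robust-consensus theorem for Laplacian systems) forces $V(t)\to 0$, i.e.\ $\psi_i\to\psi_j$ for all $i,j$. Translating back, $w_i/w_j$ converges to the target ratio, which gives Eq.~\eqref{eq: weightController_Theorem1}.

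The step I expect to be the real obstacle is closing this argument despite the edge weights $a_{ij}$ being state-dependent: $a_{ij}$ contains $w_j$ and the cell mass $M_{\mathcal W_i}$, and the latter couples the weight dynamics to the positions $p_i$ that evolve simultaneously under Eq.~\eqref{eq: position controller}. To make the consensus conclusion rigorous one needs a priori bounds $0<\underline w\le w_i(t)\le\bar w$ and $0<\underline M\le M_{\mathcal W_i}(t)\le\bar M$ for all $t$, so that each $a_{ij}(t)$ is uniformly bounded above and uniformly bounded away from zero --- this is what supplies the uniform connectivity (persistence of excitation) required by the nonautonomous LaSalle argument. Proving these bounds is the delicate part: the weights stay positive and finite because they start at $1$ and the adaptation drives the ratios toward positive finite values, and the masses stay bounded because $\mathcal Q$ is compact and convex, $\phi>0$ is bounded, and the robots remain distinct. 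The discrete resets of $E^{\mathrm{init}}$ that occur when $\dot E_i$ jumps are handled by noting such events are finite in number (or dwell-time separated), so the analysis applies on each inter-event interval and the ratios re-converge toward the updated target after each jump, yielding the asymptotic claim overall.
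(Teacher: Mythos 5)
Your proof is correct in substance but takes a genuinely different route from the paper's. The paper works with the quadratic Lyapunov candidate $\mathscr{V}=\sum_i \tfrac{1}{2}\|w_i\dot{E}_i\|^2$ (after assuming all $E_i^{init}$ are equal and $\dot{E}_i$ constant), writes $\dot{\mathscr{V}}=-k_w\tilde{w}_e^T M^{-1}\mathbb{L}\tilde{w}_e$ for $\tilde{w}_e=(w_1\dot{E}_1,\dots,w_n\dot{E}_n)^T$, and invokes positive semidefiniteness of $M^{-1}\mathbb{L}$ plus LaSalle to conclude that $\tilde{w}_e$ ends up in the null space of $\mathbb{L}$, i.e., $w_i\dot{E}_i=w_j\dot{E}_j$. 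You instead normalize to $\psi_i=w_i\dot{E}_i/E_i^{init}$, recast the controller as an asymmetric, state-dependent Laplacian consensus flow $\dot{\psi}_i=-\sum_j a_{ij}(\psi_i-\psi_j)$ with $a_{ij}>0$, and contract the span $\max_i\psi_i-\min_i\psi_i$. Your route buys two things: (i) it handles heterogeneous $E_i^{init}$ without the paper's ``for tractability'' assumption, directly yielding the combined ratio of Corollary 1; and (ii) it is robust to the asymmetry of the coupling --- the quadratic form $x^T M^{-1}\mathbb{L}x$ is \emph{not} guaranteed nonnegative for a general positive diagonal $M^{-1}$ (the symmetric part of $M^{-1}\mathbb{L}$ can be indefinite), and the paper's passage to matrix form also silently absorbs the $w_i/w_j$ factors, so the max--min contraction is arguably the cleaner closure of exactly the same equilibrium characterization. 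What the paper's version buys is brevity and a single smooth Lyapunov function. Two minor remarks: your displayed $a_{ij}$ carries a spurious factor $\dot{E}_i/E_i^{init}$ --- by your own identities the edge weight is $a_{ij}=k_w/(M_{\mathcal{W}_i}w_j)$ --- but since only positivity and boundedness are used, nothing breaks; and the uniform bounds on $w_i(t)$ and $M_{\mathcal{W}_i}(t)$ that you flag as the delicate step are genuinely needed and are implicitly assumed in the paper as well, so you have identified a shared gap rather than introduced a new one.
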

\begin{proof}[Theorem 1]
To prove that the weight adaptation law $\dot{w}_i$ applied to all robots will asymptotically lead the multi-robot system to an equilibrium state defined in Eq.~\eqref{eq: weightController_Theorem1}, we introduce a Lyapunov candidate function $\mathscr{V}$ and show that the derivative of the Lyapunov function is negative semidefinite. Consider
\vspace{-2mm}
\[\begin{aligned}
\centering
\mathscr{V} = \sum_{i=1}^n \frac{1}{2} \|w_i \dot{E_i} \|^2 .
\nonumber    
\end{aligned}
\]
\vspace{-2mm}
The time-derivative of $\mathscr{V}$ with respect to the weights $\textbf{w}$ is
\[
\begin{aligned}
\dot{\mathscr{V}} & = \frac{\partial \mathscr{V}}{\partial \textbf{w}} = 
\sum_{i=1}^n\left(w_i \dot{E_i}\right)^T \dot{w}_i 
& = - \sum_{i=1}^n \frac{k_w \left(w_i \dot{E}_i\right)^T}{{M_{\mathcal{W}_i}} } \sum_{j \in \mathcal{N}_i}   
   \left( \frac{w_i}{w_j} - \frac{E_i^{init}}{E_j^{init}} . \frac{\dot{E}_j}{\dot{E}_i} \right) 
\end{aligned}         
\]
For tractability, let's assume that $E^{init}$ is the same for all robots and, without loss of generality, assume $\dot{E}_i(t) = \dot{E}_i $ (a constant). This simplifies the above derivation to
\begin{equation}
\begin{aligned}
\dot{\mathscr{V}} & = - \sum_{i=1}^n \frac{k_w \left(w_i \dot{E}_i\right)^T}{{M_{\mathcal{W}_i}} } \sum_{j \in \mathcal{N}_i}   
   \left( \frac{w_i}{w_j} - \frac{\dot{E}_j}{\dot{E}_i} \right) 
 & =  - \sum_{i=1}^n k_w \frac{1}{{M_{\mathcal{W}_i}} } \sum_{j \in \mathcal{N}_i}  \frac{w_i}{w_j} \left( w_i \dot{E}_i - w_j  \dot{E}_j \right) 
\end{aligned}  
\nonumber
\label{eq: vdot}
\end{equation}
Re-writing this expression in matrix form gives us the flexibility to analyze the resulting product of this multivariate expression. Let's define
\[
\begin{aligned}
\Tilde{w}_e =  \begin{bmatrix} w_1 \dot{E}_1  \\ \vdots \\ w_n \dot{E}_n    \end{bmatrix}
\; , &  \;
M^{-1} = \begin{bmatrix} \frac{1}{M_{\mathcal{W}_1}} &0 & 0 \\  0 & \ddots &0 \\ 0 &0 &\frac{1}{M_{\mathcal{W}_n}} \end{bmatrix} .
\end{aligned}
\]
After manipulations, we obtain
\[
\begin{aligned}
\dot{\mathscr{V}} & = - k_w \Tilde{w}_e^T M^{-1} \mathbb{L} \Tilde{w}_e \;  \leq 0 .
\end{aligned}  
\]
Here, $\mathbb{L}$ is the Laplacian matrix of graph $\mathcal{G}$. Given that $M^{-1}$ is a positive diagonal matrix and the fact that a graph Laplacian $\mathbb{L}$ is positive semi-definite, we can see that the $M^{-1}\mathbb{L}$ is positive semi-definite.
This leads us to the conclusion that $\dot{\mathscr{V}}$ is negative semidefinite.
According to La Salle's Invariance principle \cite{la1976stability}, the largest invariance set is at $\dot{\mathscr{V}} = 0$. This equilibrium state can be reached when $\Tilde{w}_e$ falls in the null space of $\mathbb{L}$. We know that for a connected graph, the smallest eigenvalue of its Laplacian is always zero with an eigenvector $\Vec{1}$. Therefore, we obtain $\mathscr{V} = 0$ if all the entries in the vector $\Tilde{w}_e$ are identical, i.e., $ (w_i \dot{E}_i = w_j \dot{E}_j ) \, \forall i,j$. When we achieve this, the ratio of the weights becomes $(w_i / w_j) = (\dot{E_j} / \dot{E_i})$, concluding the proof of Eq.~\eqref{eq: weightController_Theorem1}.
The results can now be extended to time-varying $\dot{E}(t)$, where the weights would periodically adjust their ratios as per Eq.~\eqref{eq: weightController_Theorem1} as long as $\dot{E}$ remains constant within the convergence process.  
\end{proof}

\begin{corollary}
We can apply the results of Theorem~\ref{thm: weightlaw} to balance the ratio of the combination of initial energy and the energy depletion rates.
\begin{equation}
(w_i / w_j) \rightarrow ({E^{init}_i} \dot{E}_j) / ({E^{init}_j} \dot{E}_i) \quad \forall i,j \label{eq: weightcombo_Theorem1}
\end{equation} 
\label{thm: weightcombo}
\end{corollary}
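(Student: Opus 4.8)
\noindent\textbf{Proof plan (Corollary~\ref{thm: weightcombo}).} The plan is to re-run the Lyapunov argument of Theorem~\ref{thm: weightlaw} almost verbatim, dropping the simplifying assumption that $E^{init}$ is the same for all robots and instead folding the initial-energy heterogeneity into the error coordinate. Concretely, I would replace the candidate $\mathscr{V}=\sum_i \tfrac12\|w_i\dot E_i\|^2$ by the rescaled
\[
\mathscr{V} \;=\; \sum_{i=1}^{n}\frac{1}{2}\left\|\frac{w_i\,\dot{E}_i}{E_i^{init}}\right\|^{2},
\]
the idea being that the per-robot weight $1/E_i^{init}$ inside $\mathscr{V}$ is exactly what cancels the asymmetric factor $E_i^{init}/E_j^{init}$ that the EAC law Eq.~\eqref{eq: weightAdaptation_EAC} carries but that Theorem~\ref{thm: weightlaw} discarded for tractability. (An equivalent route is the change of variables $w_i\mapsto w_i/E_i^{init}$, but that leaves a residual $1/E_j^{init}$ inside the neighbor sum, so the rescaled-Lyapunov version is cleaner.)

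First I would, as in Theorem~\ref{thm: weightlaw}, treat $\dot{E}_i(t)=\dot E_i$ as constant over a convergence window, so that writing $\zeta_i := w_i\dot E_i/E_i^{init}$ gives $\dot\zeta_i=(\dot E_i/E_i^{init})\,\dot w_i$ and $\dot{\mathscr V}=\sum_i \zeta_i\,\dot\zeta_i$. Substituting Eq.~\eqref{eq: weightAdaptation_EAC} and simplifying the $j$-th neighbor term, every $E^{init}$ factor cancels and each summand collapses to $\tfrac{\zeta_i}{w_j}(\zeta_i-\zeta_j)$, leaving
\[
\dot{\mathscr V} \;=\; -\sum_{i=1}^{n}\frac{k_w}{M_{\mathcal{W}_i}}\sum_{j\in\mathcal{N}_i}\frac{\zeta_i}{w_j}\bigl(\zeta_i-\zeta_j\bigr),
\]
which is structurally identical to the intermediate expression in the proof of Theorem~\ref{thm: weightlaw} with $w_i\dot E_i$ replaced throughout by $\zeta_i$. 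I would then repeat the matrix step of Theorem~\ref{thm: weightlaw} verbatim: stacking $\tilde\zeta=[\zeta_1,\dots,\zeta_n]^{T}$ and using the positive diagonal $M^{-1}$ and the graph Laplacian $\mathbb{L}$, this becomes $\dot{\mathscr V}=-k_w\,\tilde\zeta^{T}M^{-1}\mathbb{L}\,\tilde\zeta\le 0$, since $M^{-1}\mathbb{L}$ is positive semi-definite on the connected graph $\mathcal{G}$.

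Finally I would close with La Salle's invariance principle exactly as in Theorem~\ref{thm: weightlaw}: the largest invariant set lies in $\{\dot{\mathscr V}=0\}$, which forces $\tilde\zeta$ into $\operatorname{null}(\mathbb{L})=\operatorname{span}(\Vec{1})$, hence $w_i\dot E_i/E_i^{init}=w_j\dot E_j/E_j^{init}$ for all $i,j$; rearranging yields $(w_i/w_j)\to(E_i^{init}\dot E_j)/(E_j^{init}\dot E_i)$, i.e., Eq.~\eqref{eq: weightcombo_Theorem1} (and setting all $E_i^{init}$ equal recovers Theorem~\ref{thm: weightlaw}). As before, the conclusion extends to time-varying $\dot E(t)$ that stays quasi-static relative to the convergence rate. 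Since this is essentially a re-run of Theorem~\ref{thm: weightlaw}, there is no genuine new obstacle; the one step that actually needs care is the cancellation in the second display --- it is exact only because the controller carries $E_i^{init}/E_j^{init}$ with that specific orientation while $\mathscr V$ weights robot $i$ by $1/(E_i^{init})^2$ --- so I would write that bookkeeping out in full rather than lean on symmetry.
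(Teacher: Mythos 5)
Your proposal is correct and is essentially the paper's own argument written out in full: the paper absorbs the initial-energy factor into a rescaled depletion rate $\dot{\mathbb{E}}_i = k_d \dot{E}_i$ (with $k_d$ implicitly robot-specific, i.e., $1/E_i^{init}$) and invokes Theorem~\ref{thm: weightlaw}, which is exactly your change of coordinate $\zeta_i = w_i\dot{E}_i/E_i^{init}$. Your explicit check that the $E^{init}$ factors cancel in $\dot{\mathscr{V}}$ is the one piece of bookkeeping the paper asserts without carrying out, so the two arguments coincide in substance.
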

\vspace{-10mm}
\begin{proof}
The combination of reserve energy $E_i$ and depletion rate $\dot{E}_i$ can be generalized to a new weighted energy depletion rate $\dot{\mathbb{E}}_i = k_d \dot{E}_i$, which can replace the depletion rate in Theorem~\ref{thm: weightlaw} to obtain $(w_i / w_j) = ({\dot{\mathbb{E}}_j}/{\dot{\mathbb{E}}_i})$.
\end{proof}

\begin{theorem}
Using the position controller Eq. \eqref{eq: position controller} and the weight controller Eq.~\eqref{eq: weightAdaptation_EAC} for all robots in a distributed manner, the multi-robot system will asymptotically converge to the stable equilibrium towards the minimum of the locational cost in Eq.~\eqref{eq: weightedCost}. i.e., 
\begin{align}
\mathbf|{p}_i - \mathbf{C}_{W_i}| \rightarrow 0\ \forall i \in n.
\label{eq: positonController_Theorem1}
\end{align} 
\label{thm: positioncontroller}
\end{theorem}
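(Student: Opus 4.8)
The plan is to use the power-diagram locational cost $H_{\mathcal{W}}(\mathbf{p},\mathbf{w})$ of Eq.~\eqref{eq: weightedCost} as the energy function for the coupled position--weight dynamics, splitting the argument into a gradient-descent part for the position loop and a cascade with the weight-convergence result of Theorem~\ref{thm: weightlaw}. First I would record the two differential identities for the power diagram: because a point on a shared face $b_{ij}=\partial\mathcal{W}_i\cap\partial\mathcal{W}_j$ satisfies $\|q-p_i\|^2-w_i=\|q-p_j\|^2-w_j$, the integrand in $H_{\mathcal{W}}$ is continuous across every cell boundary, so the Leibniz/transport boundary terms cancel pairwise, leaving $\partial H_{\mathcal{W}}/\partial p_i=-M_{\mathcal{W}_i}\bigl(C_{\mathcal{W}_i}-p_i\bigr)$ and $\partial H_{\mathcal{W}}/\partial w_i=-\tfrac12 M_{\mathcal{W}_i}$, as in the homogeneous analyses \cite{cortes2004coverage,kwok2007energy}.

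Differentiating $H_{\mathcal{W}}$ along the closed loop and substituting the position law Eq.~\eqref{eq: position controller} -- which moves each robot along the negative position-gradient of $H_{\mathcal{W}}$ up to the positive factor $M_{\mathcal{W}_i}$ -- I would obtain
\[
\dot H_{\mathcal{W}} \;=\; -\,k_p\sum_{i=1}^n M_{\mathcal{W}_i}\bigl\|C_{\mathcal{W}_i}-p_i\bigr\|^2 \;-\;\tfrac12\sum_{i=1}^n M_{\mathcal{W}_i}\,\dot{w}_i^{EAC}.
\]
The first sum is $\le 0$; the second is sign-indefinite, since $\dot{w}_i^{EAC}$ of Eq.~\eqref{eq: weightAdaptation_EAC} is not the $\mathbf{w}$-gradient of $H_{\mathcal{W}}$. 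To control it I would invoke Theorem~\ref{thm: weightlaw}: the weight subsystem reaches its equilibrium manifold, so $\mathbf{w}(t)\to\mathbf{w}^{\ast}$ and $\dot{w}_i^{EAC}(t)\to 0$; this holds independently of the positions because $\mathbf{p}$ enters Eq.~\eqref{eq: weightAdaptation_EAC} only through the strictly positive bounded scalar gain $k_w/M_{\mathcal{W}_i}$, provided the partition stays non-degenerate ($M_{\mathcal{W}_i}$ bounded away from $0$ and $\infty$, which holds on the compact $\mathcal{Q}$ with $\phi>0$ as long as no cell collapses). Thus the closed loop is an asymptotically autonomous system whose limit dynamics are the descent flow $\dot p_i=-k_p(C_{\mathcal{W}_i}-p_i)$ with the weights frozen at $\mathbf{w}^{\ast}$.

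From there the reasoning follows the standard homogeneous case. The trajectory is bounded ($\mathbf{p}(t)\in\mathcal{Q}$ compact) and $H_{\mathcal{W}}(\cdot,\mathbf{w}^{\ast})$ is bounded below, so LaSalle's invariance principle \cite{la1976stability} for the limiting flow -- combined with Barbalat's lemma applied to the vanishing residual $\tfrac12\sum_i M_{\mathcal{W}_i}\dot{w}_i^{EAC}\to 0$ -- forces every $\omega$-limit point into the largest invariant set contained in $\{\dot H_{\mathcal{W}}=0\}$, which by the displayed identity is exactly $\{\mathbf{p}:C_{\mathcal{W}_i}=p_i\ \forall i\}$, the critical set of the locational cost. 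A routine instability check (non-minimal critical points of $H_{\mathcal{W}}$ are unstable equilibria of this gradient-like flow and are therefore not attained from a generic initialization) identifies the reached configuration as a local minimizer, giving $\|\mathbf{p}_i-\mathbf{C}_{W_i}\|\to 0$ for all $i$, i.e.\ Eq.~\eqref{eq: positonController_Theorem1}.

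The hard part will be the two-way coupling: $H_{\mathcal{W}}$ is not monotone along the full trajectory (the weight law is not a gradient of $H_{\mathcal{W}}$), and the weight law itself is driven by the position-dependent masses $M_{\mathcal{W}_i}$. Making the cascade/asymptotically-autonomous step rigorous requires (i) ensuring that no power cell becomes empty during the transient so every scalar gain stays well defined and bounded, and, if one wants convergence to a single point rather than to the critical set, (ii) a mild regularity hypothesis on $\phi$ (e.g.\ analyticity, yielding a {\L}ojasiewicz inequality) or isolatedness of the critical points. An alternative that sidesteps (i)--(ii) is an explicit time-scale separation $k_w\ll k_p$: then $\mathbf{w}$ is quasi-static and $H_{\mathcal{W}}(\cdot,\mathbf{w}(t))$ acts directly as a slowly time-varying Lyapunov function for the fast position loop. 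I would develop the cascade version as the main line and note the time-scale-separation route as a remark.
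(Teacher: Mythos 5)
Your proposal takes essentially the same route as the paper: treat $H_{\mathcal{W}}$ as a Lyapunov candidate, split $\dot H_{\mathcal{W}}$ into the position term (made negative semidefinite by Eq.~\eqref{eq: position controller}) and the weight term (argued to vanish via Theorem~\ref{thm: weightlaw}), then apply LaSalle to conclude $p_i\to C_{\mathcal{W}_i}$. You are in fact more careful than the paper's own proof, which simply asserts $\dot H_2\approx 0$ by invoking the weight-convergence result rather than handling the sign-indefinite transient via the cascade/asymptotically-autonomous (or time-scale-separation) argument you spell out.
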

\vspace{-8mm}
\begin{proof}
Let us treat the cost function $H_{\mathcal{W}}$ as a Lyapunov candidate function to demonstrate that the controller drives the multi-robot system to optimal coverage positions. The time derivative of $H_{\mathcal{W}}$ is
\begin{equation}
\dot{H}_{\mathcal{W}} =\sum_{i=1}^n \int_{\mathcal{W}_i}\left(q-p_i\right)^T \phi(q) d q \dot{p}_i+\sum_{i=1}^n \int_{\mathcal{W}_i} \frac{1}{2} \phi(q) d q \dot{w}_i
\nonumber 
\end{equation}
Splitting the above equation into two parts for traceability $\dot{H}_{\mathcal{W}} = \dot{H}_1 + \dot{H}_2$.
By applying the position controller $\dot{p}_i$ in Eq.~\eqref{eq: position controller},
\[\begin{aligned}
\dot{H}_1 & = \sum_{i=1}^n \int_{\mathcal{W}_i} \left(q-p_i\right)^T \phi(q) d q \ [k_p(C_{\mathcal{W}_i}-p_i)]  
&= \sum_{i=1}^n -k_p M_{\mathcal{W}_i} [(C_{\mathcal{W}_i}-p_i)]^2 \leq 0
\end{aligned} \]

Applying the weight adaptation law in Eq.~\eqref{eq: weightAdaptation_EAC} 
\[
\begin{aligned}
\dot{H}_2 & = - \sum_{i=1}^n \frac{1}{2} M_{\mathcal{W}_i} \frac{k_w}{M_{\mathcal{W}_i}}  \sum_{j \in \mathcal{N}_i}   
   \left( \frac{w_i}{w_j} - (\frac{E_i^{init}}{E_j^{init}} . \frac{\dot{E}_j(t)}{\dot{E}_i(t)}) \right)
\end{aligned}                  
 \]
Applying the result of Theorem~\ref{thm: weightlaw} and Corollary~\ref{thm: weightcombo}, we obtain
\[
\begin{aligned}
\dot{H}_2 & \approx - \sum_{i=1}^n \frac{k_w}{2} \sum_{j \in \mathcal{N}_i} \left( \frac{w_i}{w_j} - \frac{w_i}{w_j} \right)  \approx 0 .
\end{aligned}                  
 \]
 
Therefore, $\dot{H}_{\mathcal{W}} \leq 0$, proving the asymptotic convergence of Eq.~\eqref{eq: positonController_Theorem1}.
$\dot{H}_{\mathcal{W}} = 0$ only when the velocity $\dot{p}_i  = 0$ for all robots $i \in n$. This can occur only when all the robots reach the centroid of their weighted Voronoi configuration, i.e., the robots converge to their centroids $p_i = C_{\mathcal{W}_i}$, which is the largest invariance set. This concludes the proof.
\end{proof}

Theorem~\ref{thm: weightlaw} implies the convergence of the relative ratio between the weights and the energy dynamics (rather than the weights being proportional to the energy, as in \cite{kwok2007energy}) in the system. This enables our objective of obtaining coverage area proportional to the energy characteristics, as depicted in Fig.~\ref{fig:illust}. 
Together with the centroid-seeking controller in Theorem~\ref{thm: positioncontroller}, the controller will achieve optimum coverage in the environment with the least cost in Eq.~\eqref{eq: weightedCost}. Moreover, the convergence in the weight adaptation depends on distributed coordination and happens much faster than the convergence of positions towards the weighted centroids \cite{parasuraman2019consensus}.
This allows EAC to work with time-varying energy consumption, making our controller realistic for different robot types and applications, as well as adaptable to heterogeneous robots with dynamic energy models. 

\section{Simulation Experiments and Results}

\begin{table}[t]
\centering
\caption{Description of various scenarios tested in the simulation for six robots. Highlighted in red boldface are the key differences in the settings. We also list the final weights obtained by the different approaches for energy-aware coverage partitioning. }
\label{table:exp_scenarios}
\resizebox{1\linewidth}{!}{
\begin{tabular}{|c|c|c|c|c|}
\hline
\multirow{2}{*}{\textbf{Scenario}} & \multirow{1}{*}{\textbf{Initial Energy Reserve}} &  \multicolumn{3}{c|}{Energy Depletion Rate $N=6$}  \\ 
\cline{3-5}
 & $E_i^{init}$ (\%) &  \textbf{Temporal} $\boldsymbol{\alpha_i}$ &  \textbf{Spatial} $\boldsymbol{\beta_i}$ 
& Total Depletion $\boldsymbol{\dot{E}_i(t)}$ \\

\hline

1 & \{100,100,100,100,100,100\} & \{1,1,1,1,\textbf{\textcolor{red}{5}},1\} & \{1,1,1,1,1,1\} 
& \{1.4,1.4,1.4,1.4,\textbf{\textcolor{red}{5.4}},1.4\}  
\\ \hline
2 & \{25,25,25,\textbf{\textcolor{red}{100}},25,25\} & \{1,1,1,\textbf{\textcolor{red}{4}},1,1\} & \{1,1,1,1,1,1\} 
& \{1.4,1.4,1.4,\textbf{\textcolor{red}{4.4}},1.4,1.4\} 
\\ \hline
\multirow{3}{*}{3} & @ t=0, $E^{init}_i$ = 100 \% & \{1,1,1,1,1,1\} & \{1,1,\textbf{\textcolor{red}{5,5}},1,1\} 
& \{1.4,1.4,\textbf{\textcolor{red}{3.0}},\textbf{\textcolor{red}{3.0}},1.4,1.4
\} \\
 & @ t=11 & \{1,1,1,1,1,1\} & \{\textbf{\textcolor{red}{5,5,1,1,5,5}}\} 
& \{ \textbf{\textcolor{red}{3.0,3.0,1.4,1.4,3.0,3.0}}\}  \\
 & @ t=22 & \{1,1,1,1,1,1\} & \{\textbf{\textcolor{red}{1,1,5,5,1,1}}\} 
& \{\textbf{\textcolor{red}{1.4,1.4,3.0,3.0,1.4,1.4}}
\}
\\ \hline
\end{tabular}
}
\vspace{2mm} \\
\resizebox{1\linewidth}{!}{
\begin{tabular}{|c|c|c|c|c|}
\hline
\multirow{2}{*}{\textbf{Scenario}} & \multirow{2}{*}{Time Instants} & \multicolumn{3}{c|}{\textbf{Final weights $w_i$ of all robots. (Initial weights = \{1,1,1,1,1,1\})}} \\ 
\cline{3-5}
 & &  \textbf{EAC (ours)} & \textbf{ATC \cite{pierson2016adaptivetrustweighting}} &  \textbf{PBC \cite{duca2020multi}}  \\

\hline

1 & Final & \{1.5,1.5,1.5,1.5,0.4,1.5\} & \{1,1.1,1.1,1.1,0.6,1.1\}  & \{-0.8,-0.9,-0.9,-0.9,-1.0,-0.9\} 
\\ \hline
2 & Final  & \{1,1,1,1.2,1,1\} & \{1,1.1,1,0.6,1,1\} &  \{-1,-1,-1,-0.9,-1,-1\}
\\ \hline
\multirow{3}{*}{3} & @ t=0 &  \{1.4,1.4,0.6,0.6,1.4,1.4\} & \{1.1,1.1,0.7,0.7,1.1,1.1\}  & \{-1,-1,-1,-1,-1,-1\} \\
 & @ t=11 & \{3.0,3.0,1.4,1.4,3.0,3.0\}  & \{1.1,1.1,0.7,0.7,1.1,1.1\}  & \{-1,-1,-1,-1,-1,-1\} \\
 & @ t=22 (Final) & \{1.4,1.4,0.6,0.6,1.4,1.4\} & \{1.1,1.1,0.7,0.7,1.1,1.1\}  & \{-1,-1,-1,-1,-1,-1\} \\ 
 \hline
\end{tabular}
}
\end{table}

\def\figwidth{0.23}
\begin{figure*}[t]
    \centering
    \begin{tabular}{ccccc}
        \textbf{\underline{Initial}} & \textbf{\underline{EAC}} & \textbf{\underline{ATC}} &
        \textbf{\underline{WMTC}} & \textbf{\underline{PBC}} \\
        \hspace{-0.05in}\includegraphics[width=\figwidth\columnwidth]{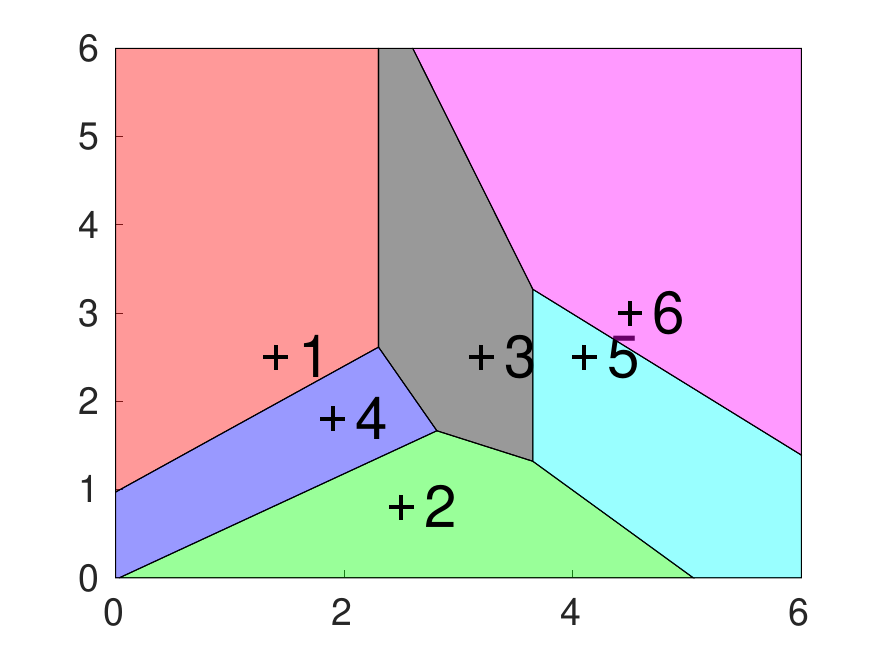} &
        \hspace{-0.2in}\includegraphics[width=\figwidth\columnwidth]{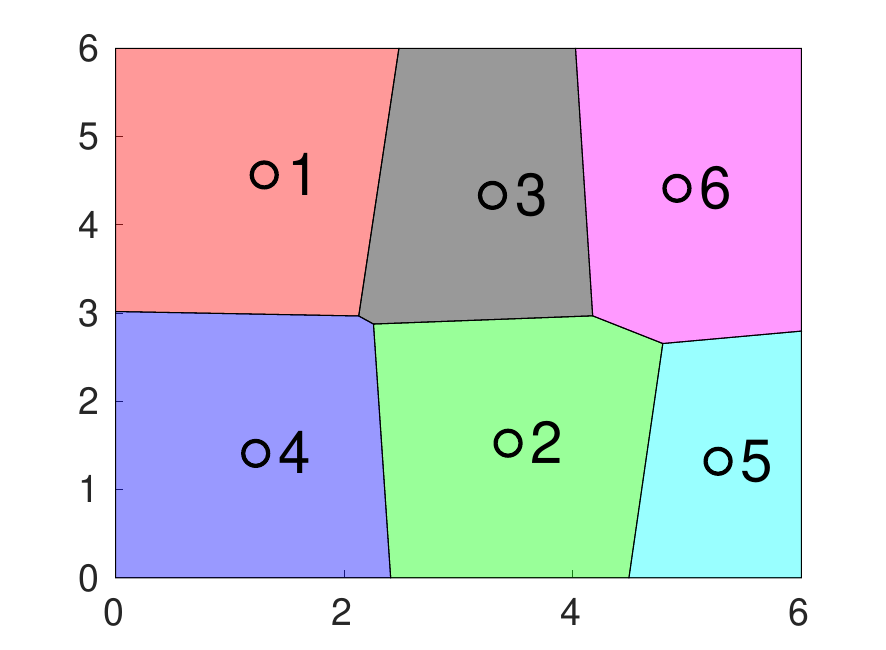} &
        \hspace{-0.2in}\includegraphics[width=\figwidth\columnwidth]{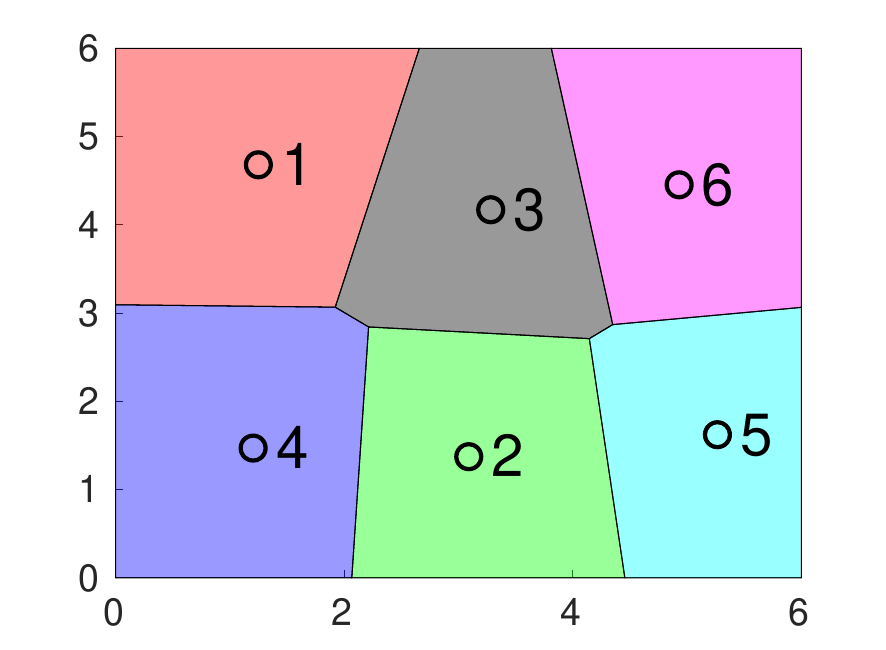 } &
        \hspace{-0.2in}\includegraphics[width=\figwidth\columnwidth]{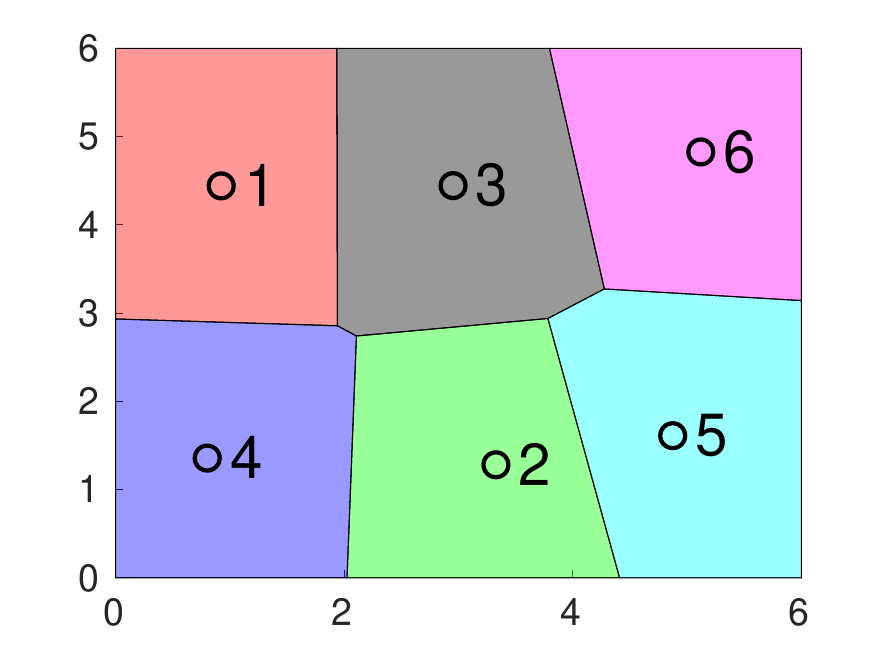} &
        \hspace{-0.2in}\includegraphics[width=\figwidth\columnwidth]{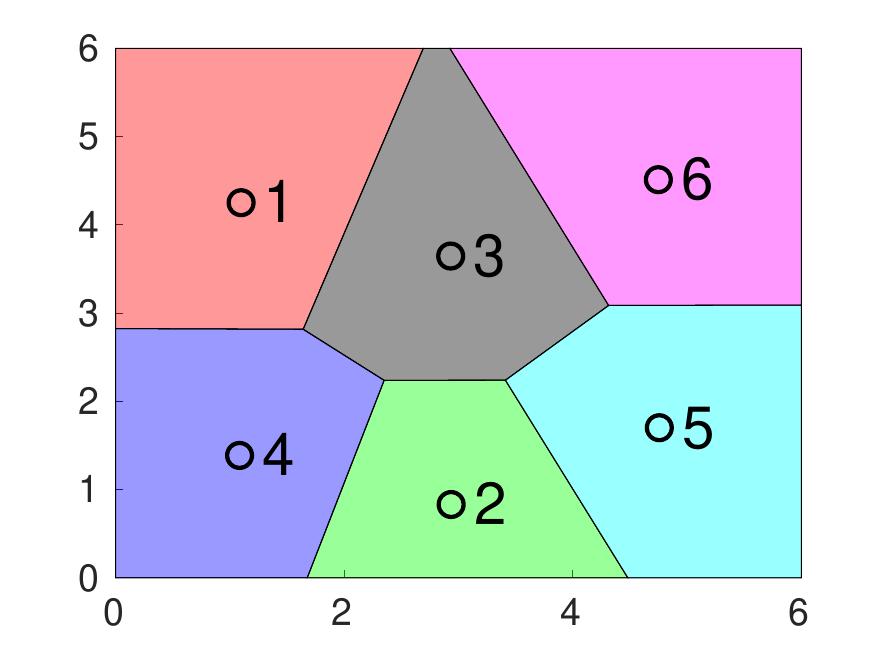} \\
        \hspace{-0.05in}\includegraphics[width=\figwidth\columnwidth]{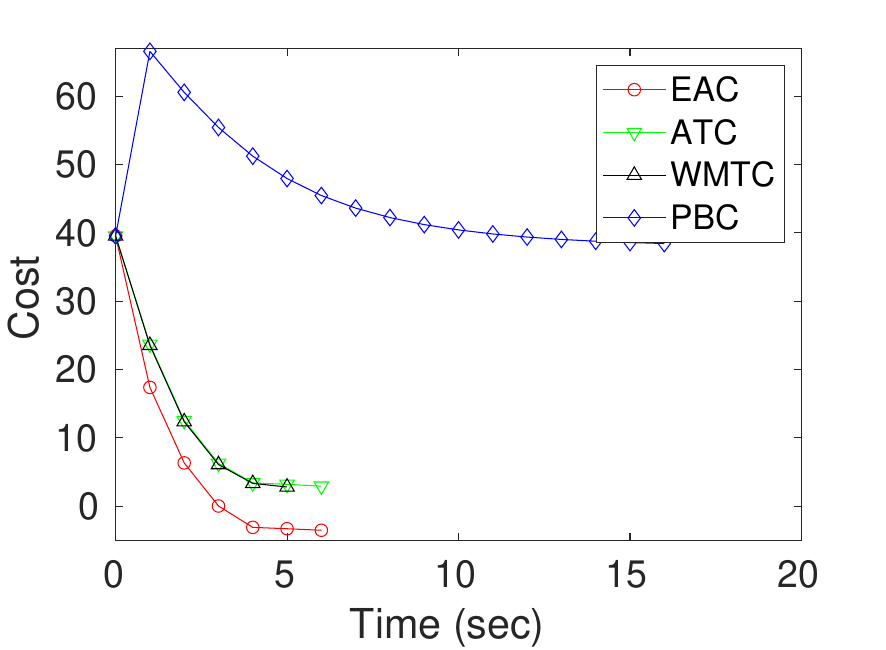} &        
        \hspace{-0.2in}\includegraphics[width=\figwidth\columnwidth]{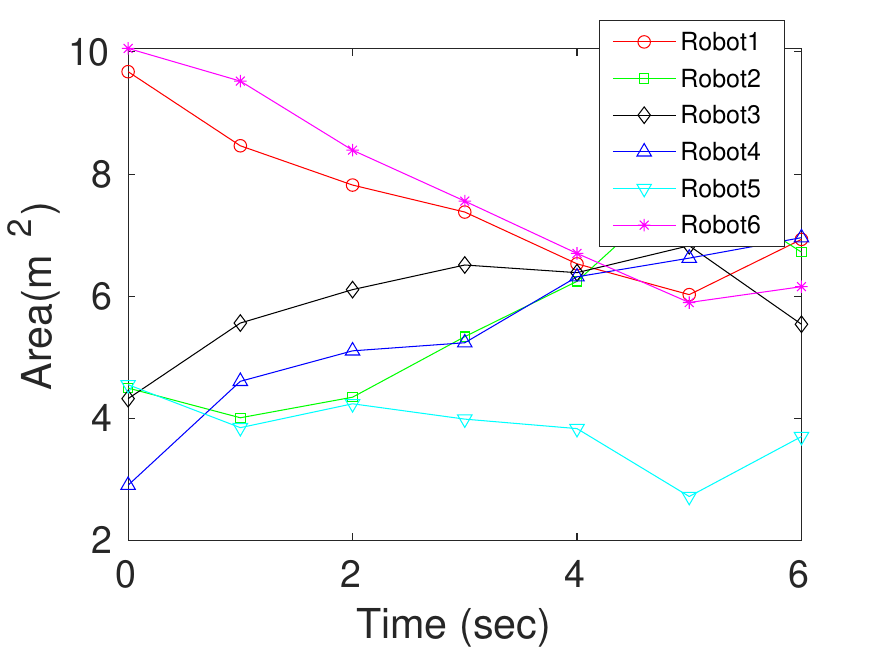} &
        \hspace{-0.2in}\includegraphics[width=\figwidth\columnwidth]{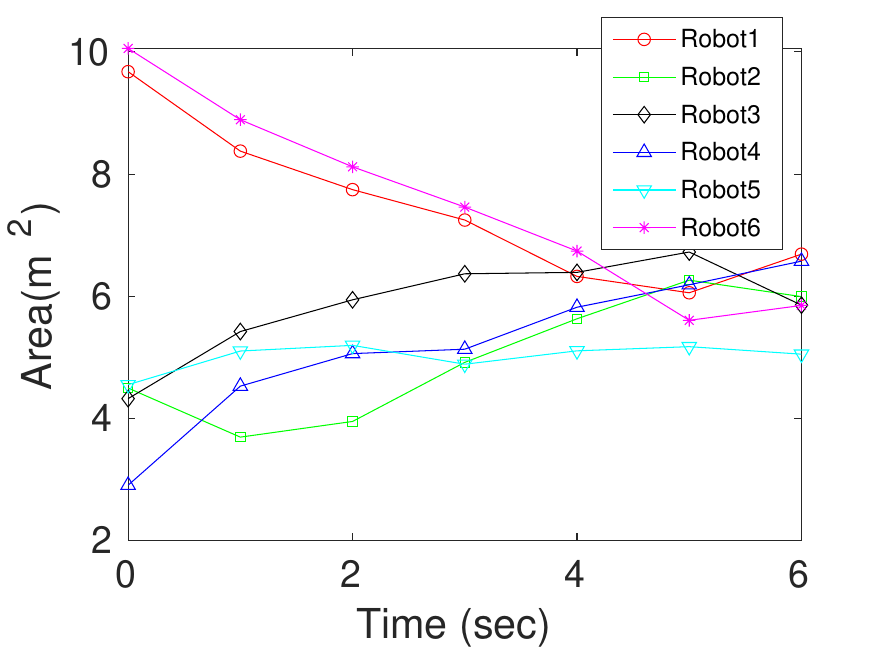} &
        \hspace{-0.2in}\includegraphics[width=\figwidth\columnwidth]{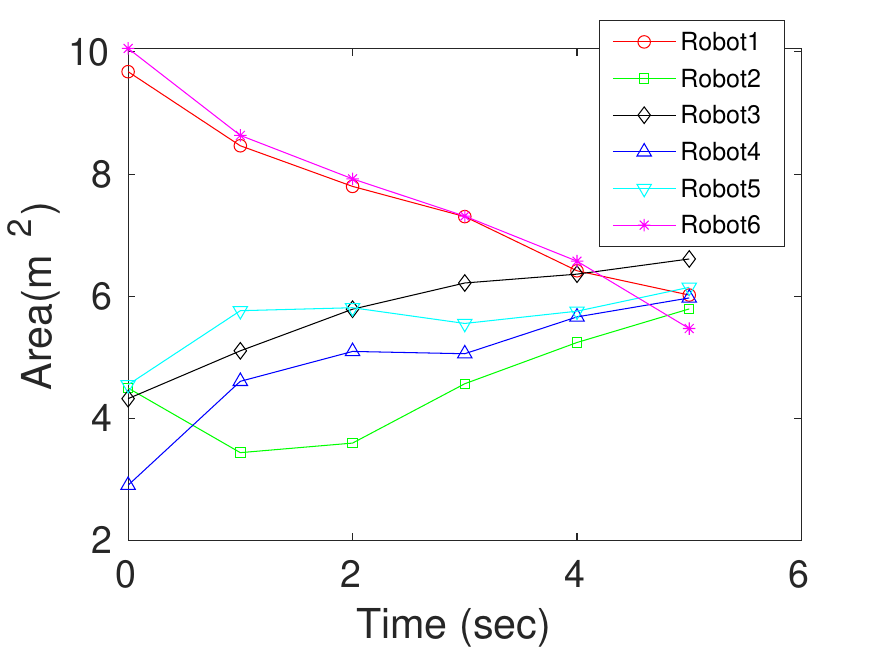} &
        \hspace{-0.2in}\includegraphics[width=\figwidth\columnwidth]{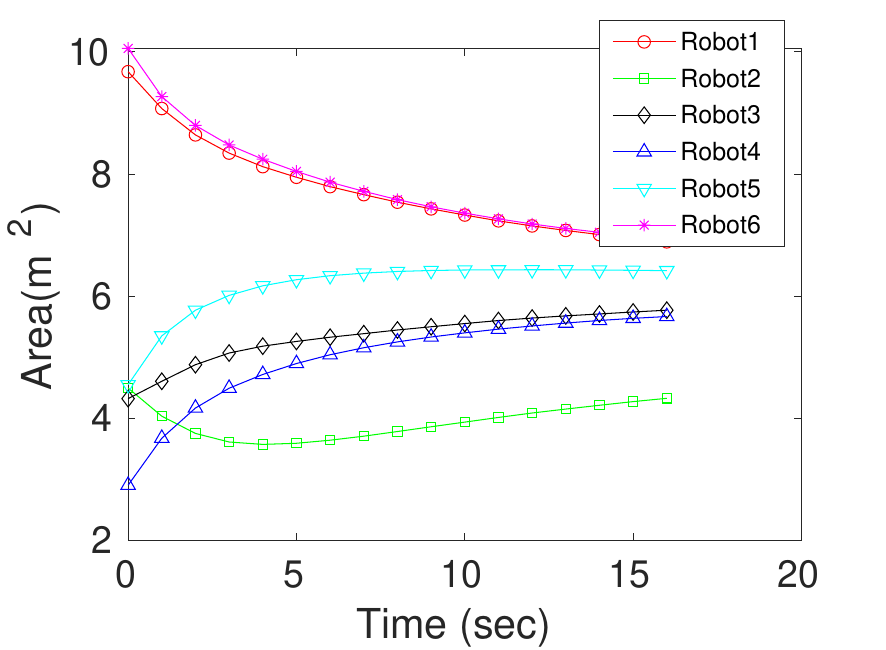}
    \end{tabular}
    \vspace{-4mm}
    \caption{Results of Scenario 1, where all robots have the same energy characteristics $E_i(0)=100, \alpha_i=1, \beta_i=1$, but robot $5$ has a higher temporal energy depletion rate $\alpha_5=5$. The top plots show the initial configurations and the final partitions calculated by all the approaches. The bottom plots show the cost comparison (left-most) and the area convergence over time.}
    \label{fig:results_E2}
    \vspace{-4mm}
\end{figure*}

We compared our proposed EAC controller against three relevant coverage controllers from the literature, as listed below. 

\begin{itemize}
\item \textbf{WMTC} - As a baseline, we implement 
the weighted Voronoi (power diagram) in Eq.~\eqref{eq: powerDiagram} with constant and equal weights for all robots \cite{cortes2010coverage}. It uses the move-to-centroid (MTC) position controller in Eq. \eqref{eq: position controller}.

\item \textbf{PBC} - We implement the control law from \cite{duca2020multi}, herein referred to as the Power Balance Controller (PBC). Although the authors focus on the time-varying density function $\phi(t)$ (which affects the centroid $C_{\mathcal{W}_i}$ calculation), their energy-aware control law is useful for comparison.
    \begin{align*}
\dot{p}_{i} &= -k(E_{i}) 
\begin{cases}
    p_{i}-C_{\mathcal{W}_i}, & \text{if } \Vert p_{i}-C_{\mathcal{W}_i}\Vert \leq 1, \\
    \frac{p_{i}-C_{\mathcal{W}_i}}{\Vert p_{i}-C_{\mathcal{W}_i}\Vert}, & \text{if } \Vert p_{i}-C_{\mathcal{W}_i}\Vert > 1.
\end{cases} 
\end{align*}
Here, the weights $w_i (t) = E_{max} - E_i (t)$ are adapted based on the robot's current energy budget, where $E_{max}$ is the maximum energy capacity. 

\item \textbf{ATC} - In ATC~\cite{pierson2016adaptivetrustweighting}, the robots with higher quality sensors get assigned larger areas to cover. We use ATC as a baseline as it adapts weighting online similarly to us, albeit for sensor health. In our implementation of ATC,  
the trust is inversely proportional to the energy depletion rate to assign lower weights (area) to robots with higher energy use. The position controller is the same as in Eq. \eqref{eq: position controller}. 
\begin{align*}
    \dot{w_i} = \frac{\alpha k_w}{2M_{W_i}} \sum_{j \in \mathscr{N}_i} \left( (w_j - w_j) - (e_j - e_i) \right) , e_i = (\frac{K_e}{\dot{E}_i(t)})^2 .
\end{align*}
$K_e$ is a constant to scale the energy depletion rates based on the environment size.
\end{itemize}
\vspace{-8pt}
To perform comparisons against the baselines fairly, we have set the velocity of all robots to $0.4 {m}/{s}$ (with $dt=1$) for all the controllers, except PBC, which assigns velocity based on energy. 
Also, we set the initial weights of all controllers to 1, considering normalized energy with respect to maximum energy capability $E_{max}$. During the experiments, all controllers will change their weights based on the current energy levels as per their implementation. For instance, the PBC considers the instantaneous energy level differences to adjust the robots' weights.
We have implemented the controllers described above in MATLAB simulations with $n=6$ (but the algorithm is built on top of the WMTC, which is a decentralized controller that can scale to any number of robots in a distributed graph $\mathcal{G}$ as noted in \cite{cortes2004coverage}). 
The environment $\mathcal{Q}$ is a $6 \times 6$ square region. The initial weights were set as $w_i(0) = 1,  \forall i \in R$. 
We consider a uniform density function (i.e., $\phi(q) = 1 \, \forall q$) to demonstrate the effectiveness of the controller on the energy-aware characteristics without being influenced by the density function $\phi$. 

We rigorously test the controllers in various scenarios, as discussed below. 
Table~\ref{table:exp_scenarios} provides the details of these scenarios and the resulting (final) weights of different controllers. Note the $\dot{E}$ is constant for all robots in the first three scenarios.
A baseline scenario (Scenario 0, not discussed here due to its simplicity) is when the energy-related parameters such as $E^{init}_i, \alpha_i, \beta_i$ of all the robots are the same (or identical) and all controllers result in the same partitioning as the WMTC in the scenarios below as WMTC does not depend on energy characteristics.

\begin{figure}[t]
    \centering
    \begin{tabular}{cccc}
        \textbf{\underline{Energy Level $E_i$}} & \textbf{\underline{Weights $w_i$}} & 
        \textbf{\underline{Weight Convergence $w_i\dot{E}_i$}} & \textbf{\underline{Distance to Centroid}} \\
        \hspace{-0.1in}\includegraphics[width=0.25\linewidth]{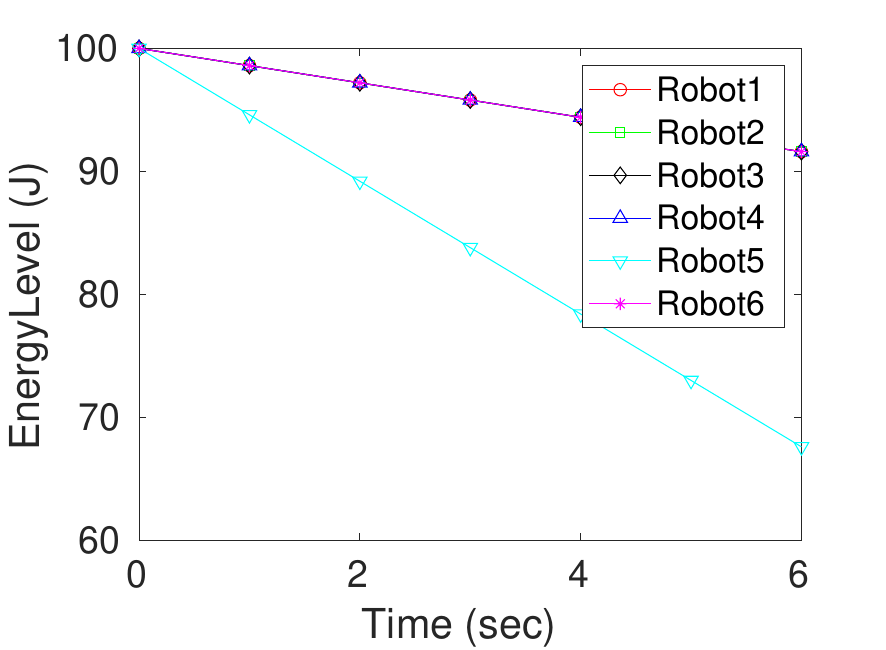} &
        \hspace{-0.1in}\includegraphics[width=0.25\linewidth]{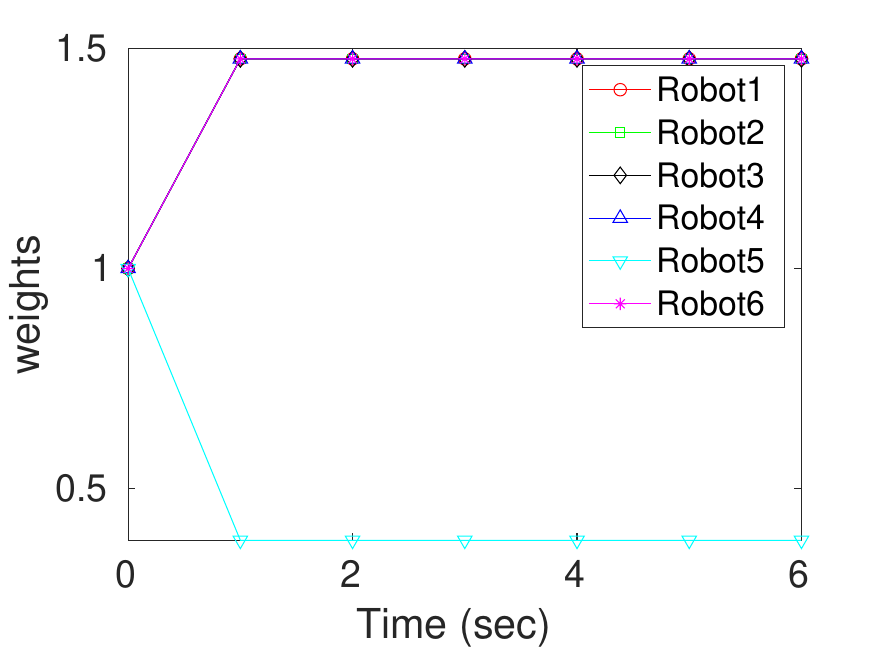} &
        \hspace{-0.1in}\includegraphics[width=0.25\linewidth]{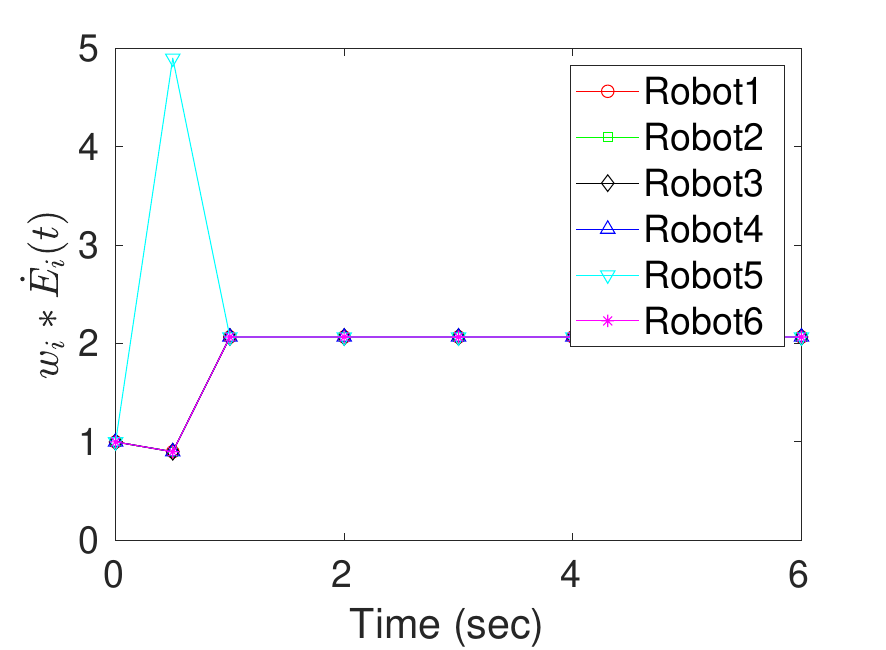} &
        \hspace{-0.1in}\includegraphics[width=0.25\linewidth]{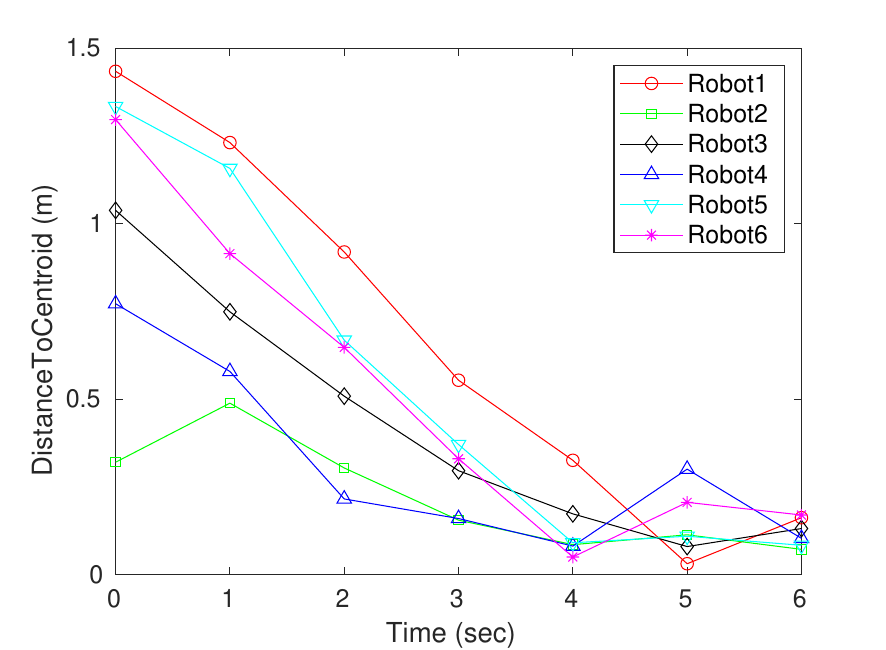} \\
    \end{tabular}
    \vspace{-4mm}
    \caption{Time-evolution of the robot energy levels $E_i$, weights $w_i$, and the convergences of weight ratios $w_i\dot{E}_i$ and the robot distance from the centroid of their respective Voronoi cell in Scenario 1 with EAC Controller.}
    \label{fig:weight_convergence}
\end{figure}

\subsection{Scenario 1 - Different Temporal Energy Rates}
This scenario illustrates how the region would be re-partitioned if temporal energy depletion ($\alpha$) for one robot is greater than others. 
In this setting, the robots start with the same battery reserve, $\alpha=1$, and $\beta=1$. 
But, robot $5$ is parameterized with a higher $\alpha=5$ showing the total energy depletion rate ($\dot{E}_i(t)=5.4$) compared to all other robots ($\dot{E}_i(t)=1.4$), prompting the controller to conserve energy by limiting the robot's allocated area, and consequently, reducing its travel cost. Following EAC, robot $5$ is assigned a weight ($0.4$) that is almost one-fourth of the weight assigned to other robots ($1.5$). As shown in Fig. \ref{fig:results_E2}, robot $5$'s region shrinks compared to the WMTC partition. Although ATC could also shrink the region of this robot, the cost reduction achieved by our proposed EAC is substantial, reaching approximately -3.5, while the costs of ATC, WMTC, and PBC are 2.9, 2.8, and 38.4, respectively. 
Also, PBC required significantly higher iterations to converge because of its inherent velocity constraints. 
Thus, EAC outperformed the baselines. Fig.~\ref{fig:weight_convergence} shows various metrics over time. For instance, the $w_i\dot{E}_i$ values of all robots quickly converged to the same numbers, supporting Theorem~\ref{thm: weightlaw}. 
Also, as the robots ran the controller in Eq.~\eqref{eq: position controller}, the robots asymptotically converged to the centroids (Theorem~\ref{thm: positioncontroller}).


\begin{figure*}[t]
    \centering
    \begin{tabular}{ccccc}
        \textbf{\underline{Initial}} & \textbf{\underline{EAC}} & \textbf{\underline{ATC}} &
        \textbf{\underline{WMTC}} & \textbf{\underline{PBC}} \\
        \hspace{-0.1in}\includegraphics[width=\figwidth\columnwidth]{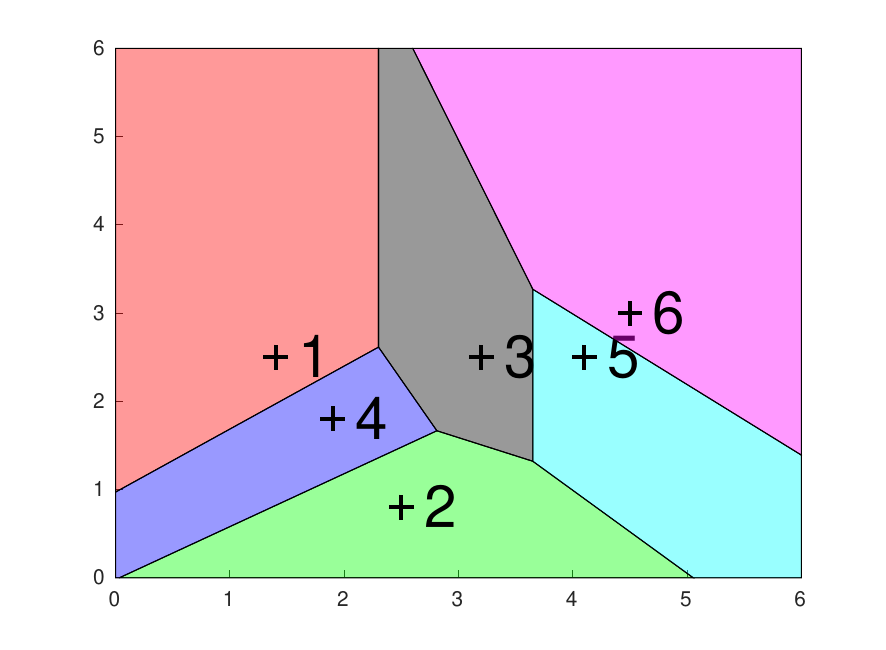} &
        \hspace{-0.2in}\includegraphics[width=\figwidth\columnwidth]{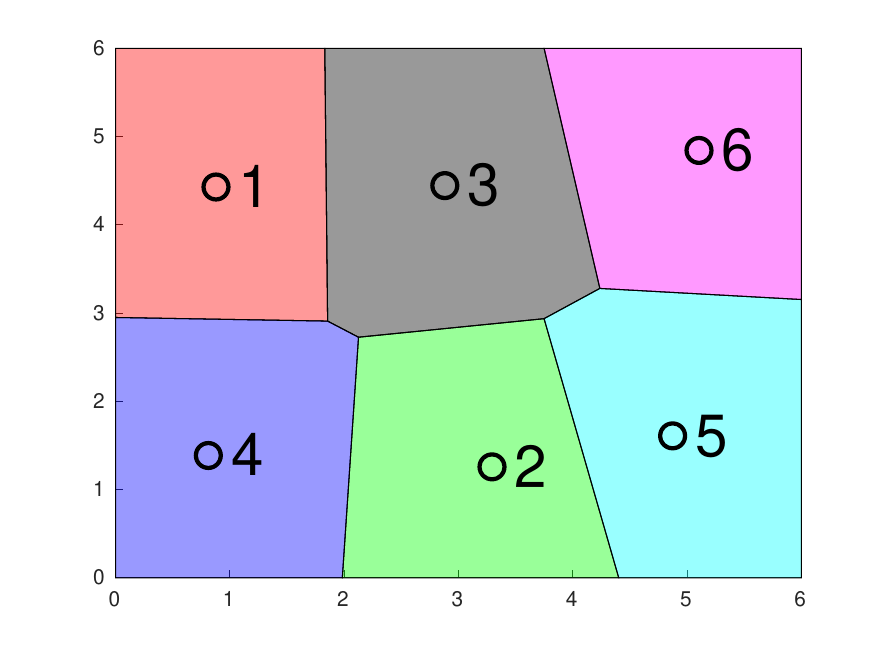} &
        \hspace{-0.2in}\includegraphics[width=\figwidth\columnwidth]{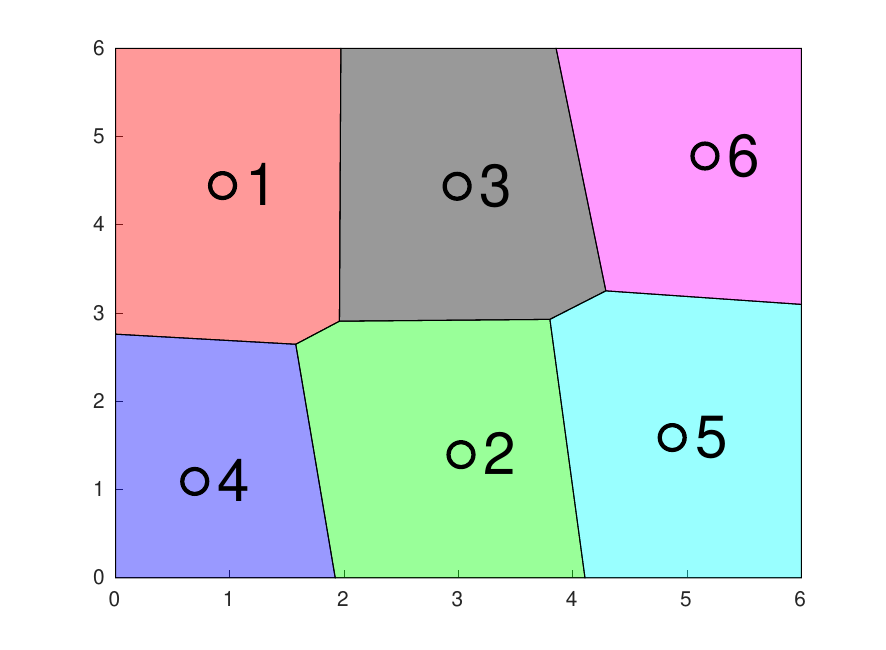 } &
        \hspace{-0.2in}\includegraphics[width=\figwidth\columnwidth]{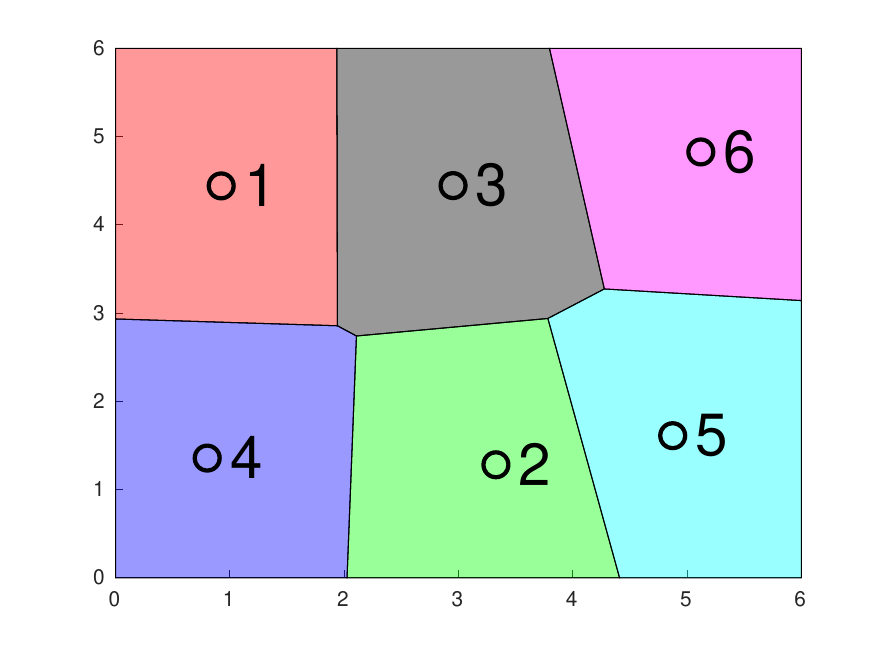} &
        \hspace{-0.2in}\includegraphics[width=\figwidth\columnwidth]{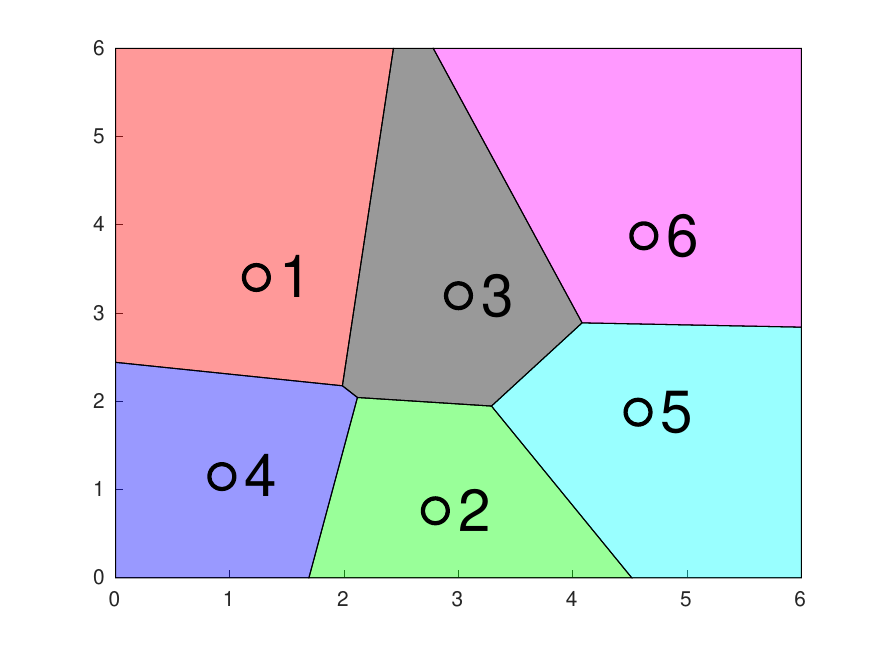} \\
        \hspace{-0.1in}\includegraphics[width=\figwidth\columnwidth]{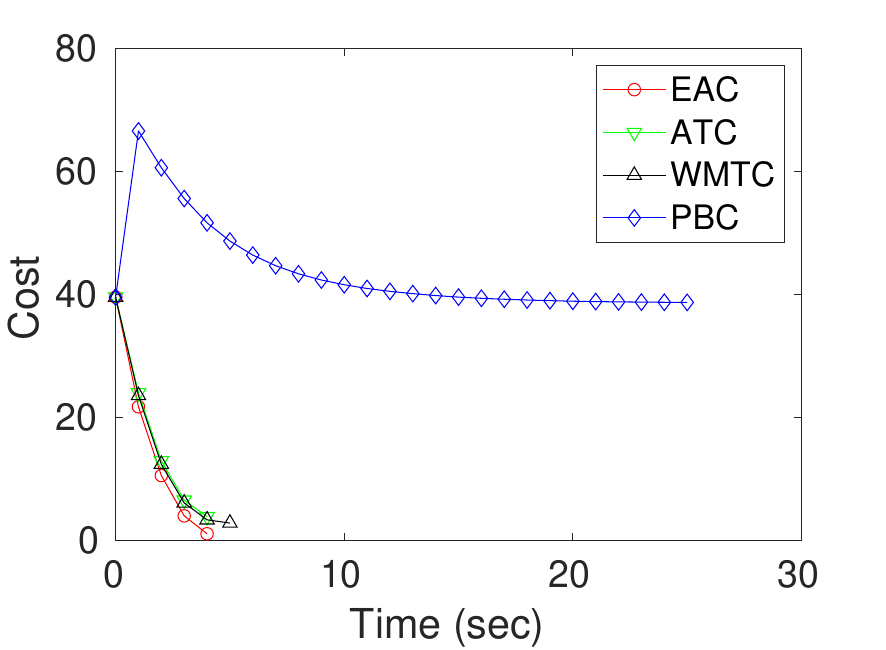} &        
        \hspace{-0.2in}\includegraphics[width=\figwidth\columnwidth]{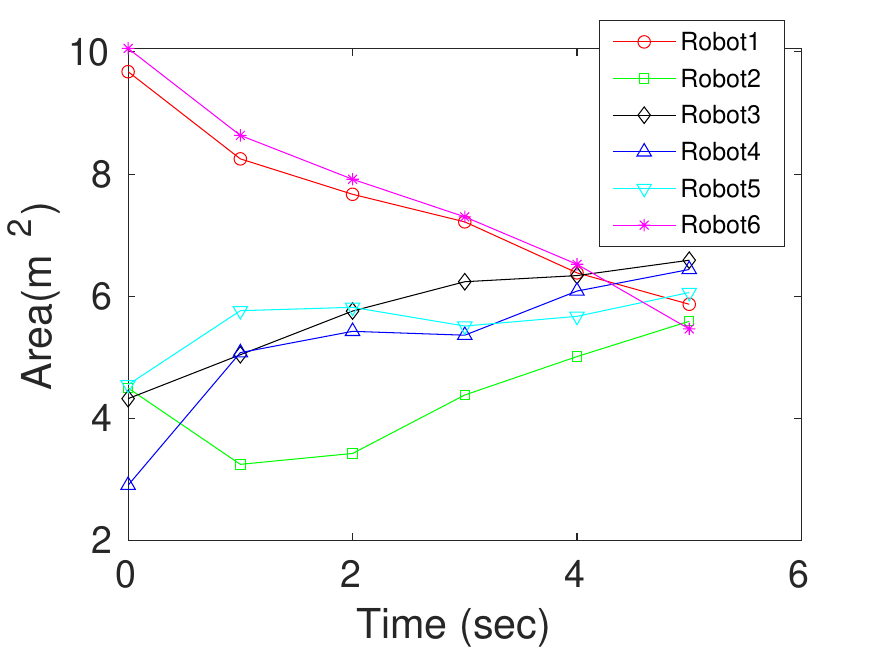} &
        \hspace{-0.2in}\includegraphics[width=\figwidth\columnwidth]{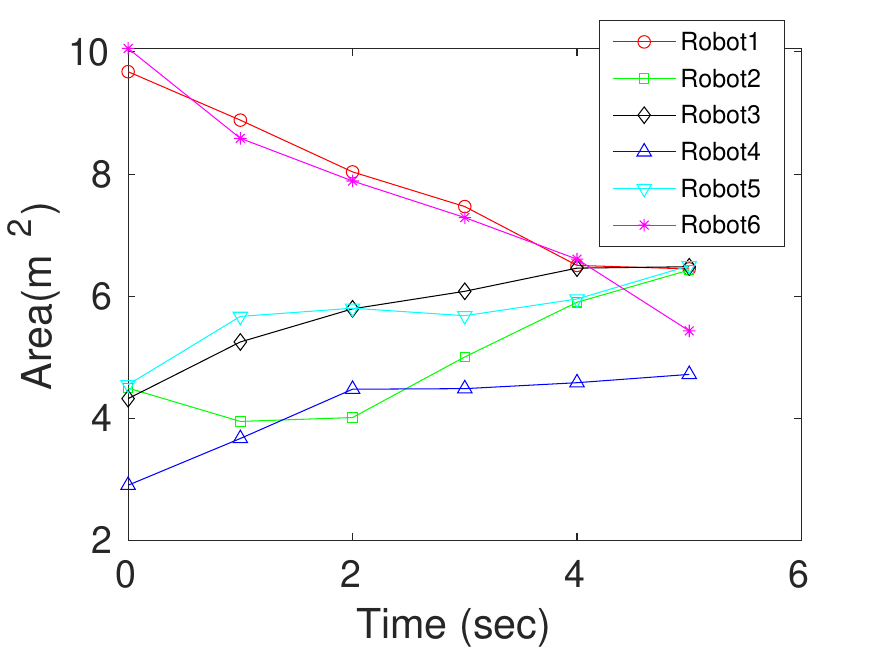} &
        \hspace{-0.2in}\includegraphics[width=\figwidth\columnwidth]{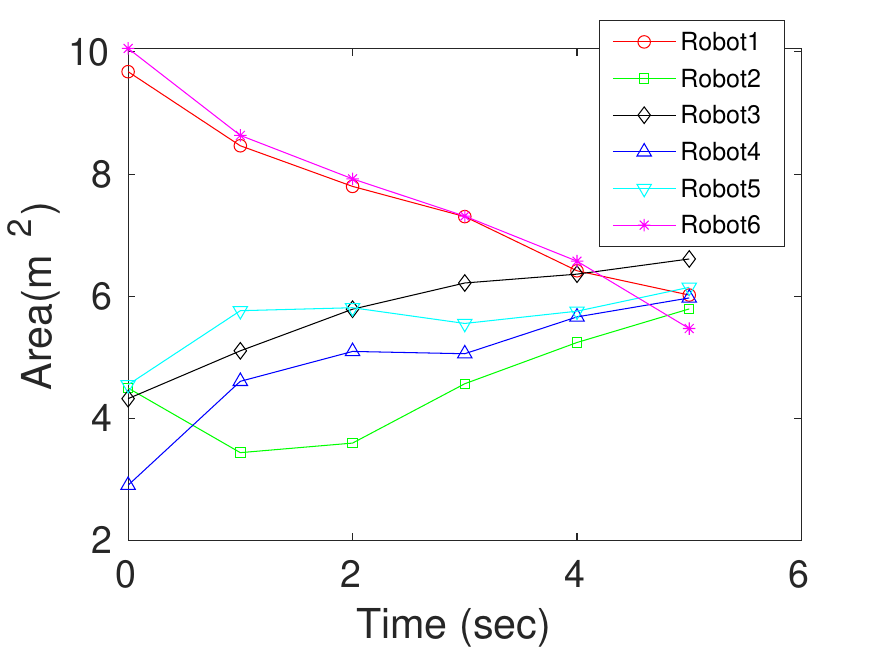} &
        \hspace{-0.2in}\includegraphics[width=\figwidth\columnwidth]{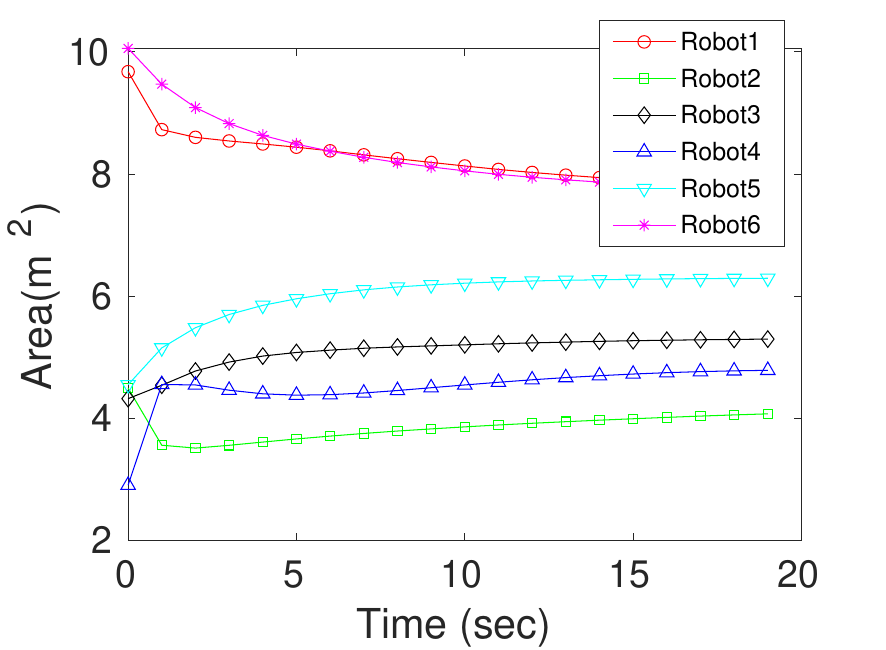}
        
    \end{tabular}
    \vspace{-4mm}
    \caption{Results of Scenario 2 experiments, where robots have a heterogeneous combination of energy levels and depletion rates $\alpha_4=4$ and $E_4^{init}=100$, where other robots have $\alpha = 1, E^{init}=25$. Because robot $4$ started with high capacity but depleted energy at a 4.4x rate compared to other robots, their dynamics would eventually cancel out each other, and the final weights should remain similar for all robots in an energy-aware coverage.
    }
    \label{fig:results_E5}
    \vspace{-2mm}
\end{figure*}

\subsection{Scenario 2 - Heterogeneous Energy Depletion} 
In this setting, the robot $4$ has the highest initial battery level (100), but its depletion rate is also the highest ($\dot{E}_4=4.4$). The exact opposite is true for all other robots, i.e., they have lower available energy (25) and depletion rate ($\dot{E}_i = 1.4$). 

The coverage area should remain similar among all robots like WMTC because a 4x higher depletion rate compensates for the 4x higher initial capacity. 
The results presented in Fig. \ref{fig:results_E5} show that only EAC was able to adjust the weights according to both the initial energy levels and depletion rates and assign almost similar weights to all the robots (refer to Table \ref{table:exp_scenarios}).  
The ATC approach could not adapt the weights effectively, and its converged weights align only with the ratio of energy depletion rates with reduced area assignment for robot $4$. On the other hand, PBC restricts the movement of all robots except $4$ and yields final weights that do not consider both available energy and depletion rates. 
The efficacy of EAC is attributed to the fact that the ratio of the updated weights corresponds well to the ratio $E_{init}/\dot{E}(t)$.

\begin{figure*}[t]
    \centering
    \begin{tabular}{ccccc}
        \textbf{\underline{Locational Cost}} & \textbf{\underline{EAC}} & \textbf{\underline{ATC}} &
        \textbf{\underline{WMTC}} & \textbf{\underline{PBC}} \\
      
        \hspace{-0.1in}\includegraphics[width=\figwidth\columnwidth]{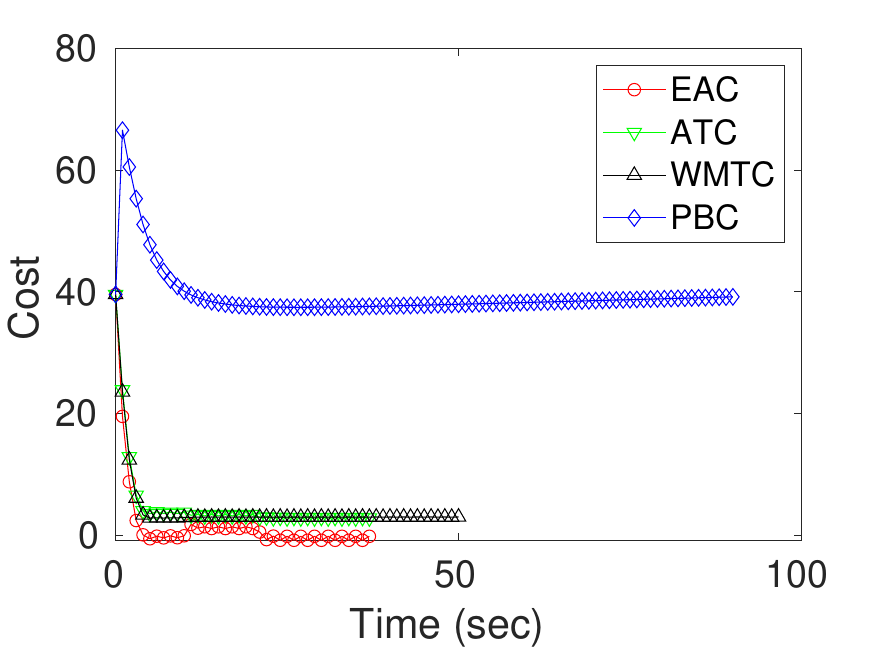} &     
        \hspace{-0.2in}\includegraphics[width=\figwidth\columnwidth]{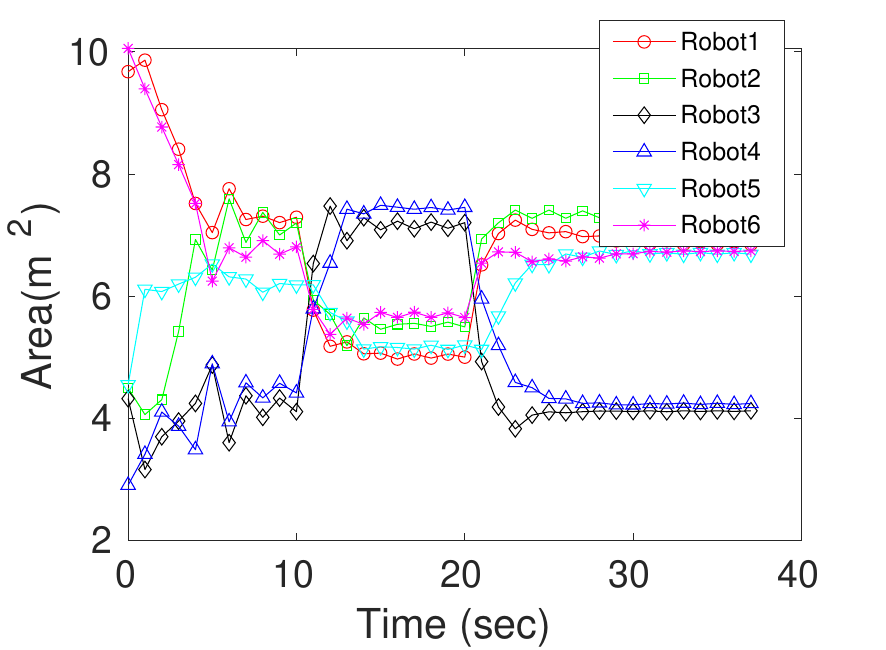} &
        \hspace{-0.2in}\includegraphics[width=\figwidth\columnwidth]{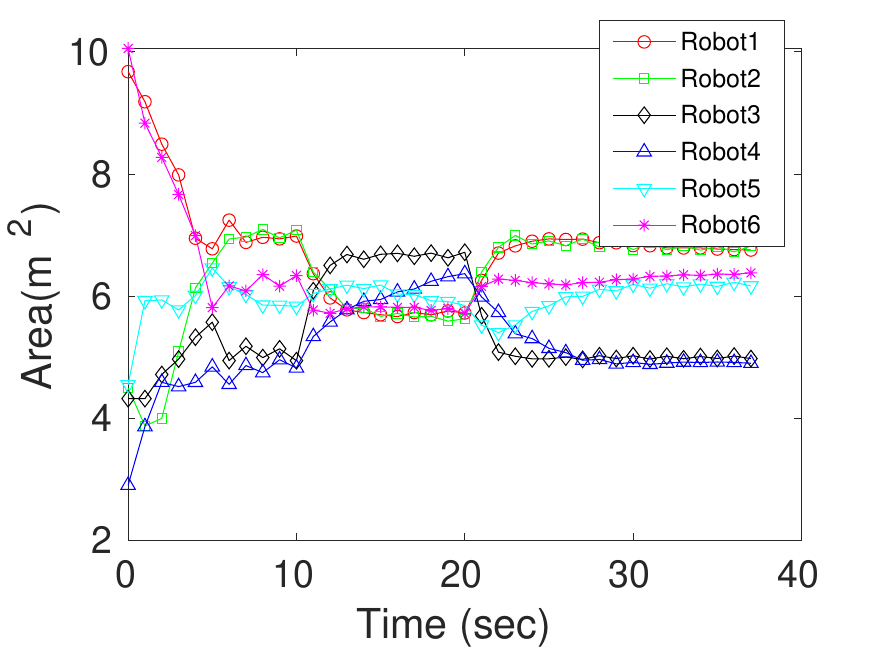} &
        \hspace{-0.2in}\includegraphics[width=\figwidth\columnwidth]{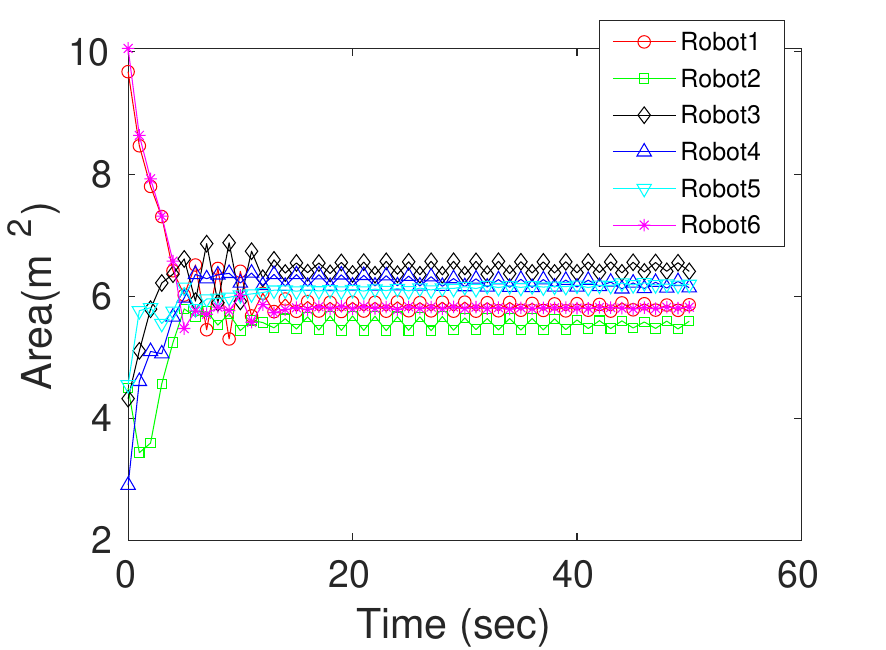} &
        \hspace{-0.2in}\includegraphics[width=\figwidth\columnwidth]{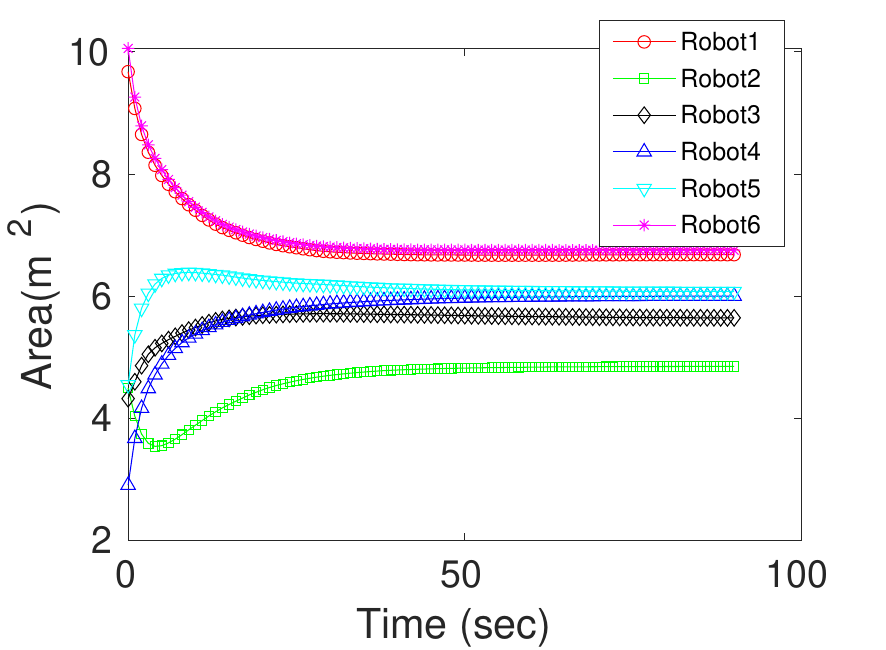}
        
    \end{tabular}
    \vspace{-4mm}
    \caption{Results of Scenario 3 experiments, showing the effect of dynamic $\beta(t)$. Initially, robots 3 and 4 had $\beta=5$, while other robots have $\beta=1$. This rate flipped at time $t=11$, and again reset to initial values at $t=22$. EAC and ATC adapted to these dynamics in energy depletion rates.}
    \label{fig:results_E6}
    \vspace{-2mm}
\end{figure*}

\subsection{Scenario 3 - Time-varying Depletion Rates}
This setting illustrates the effect of time-varying energy depletion rates by varying the $\beta$ of robots $3 \text{ and } 4$ over time. This can happen due to the terrain. 
At the start of the experiment ($t=0$), all robots have the same initial energy, the same $\alpha$, but $\beta_3=\beta_4=5$. However, at time step $t=11$, the $\beta$ values of all robots are flipped, and at $t=22$, they are reverted to their initial values as shown in Table \ref{table:exp_scenarios}. A good energy-aware controller should be able to seamlessly handle the change in energy depletion rates.

It can be observed from Fig. \ref{fig:results_E6} that the effect of the dynamics in $\beta$ value changes are perfectly captured in the evolution of weights and the coverage area yielded by the EAC controller.
Both PBC and WMTC assigned similar weights to the robots, disregarding the change in $\beta$. On the other hand, ATC also adapted its weights due to the change in $\beta$ values over time, but its effectiveness in the weight is limited to the differences in the energy depletion rates and not the ratio of the rates. At time step $t=20$, all robots once again possess the same energy value of $E_i(t)=56$, alongside the same initial depletion rate as at the start of the experiment. 
As the final iteration, we observed a replication in weight values among the robots that were evident in the initial time instants,
Additionally, EAC's cost reduction surpassed all other approaches, resulting in a substantial decrease in the locational cost value.

\subsection{Study on the Scalability and the Effect of Connectivity}
We performed several experiments to assess the proposed approach's efficacy in larger environments by varying the number of robots (along with the workspace dimensions according to the increase in robots) and varying the connectivity level of the graph $\mathcal{G}$, forming the team of robots. We executed simulations until each robot's energy exhaustion and recorded the convergence cost for various algebraic connectivity values.

The algebraic connectivity, denoted as $\lambda_2$, serves as a metric that delineates a graph's connectedness and is defined as the second-smallest eigenvalue of the Laplacian matrix associated with the graph. The convergence cost was computed as the sum of the squared difference of the weight convergence values $c_i$.
\begin{equation}
    \text{Convergence Cost} = \sum_{i=1}^n \sum_{j \in \mathcal{N}_i}^n (c_i - c_j)^2 ,
\label{convergence_cost}
\end{equation}
where $c_i = \frac{w_i \times \dot{E}_i}{E^{init}_i}$ that explains the difference in the calculated weights and the expected weights based on Eq. \eqref{eq: weightcombo_Theorem1}.

To substantiate the scalability of the approach, we simulated scenarios with \(n = 20, 50, \) and \(100\) robots in environment sizes \(50 \times 50 m^2\), \(100 \times 100 m^2\), and \(200 \times 200 m^2 \) respectively. We also varied the algebraic connectivity of the graph based on the initial positions of the robots by varying a disk radius around each robot to form its neighbor set ${\cal N}_i$. After obtaining the convergence cost for different algebraic connectivity scenarios, we use the final convergence cost corresponding to the lowest algebraic connectivity (i.e., the least connected graph with $\lambda_2$ close to 1) as a reference for comparing the convergence time. Fig.~\ref{fig:connectivity} shows the results of these experiments. The left plot shows the convergence cost of the weights for $n=20$ robots against $\lambda_2$ of the graph. The results demonstrate quick convergence when the connectivity is strong and a slower but guaranteed convergence when the connectivity of the graph is sparse. The right plot shows the time taken to converge the weights using the EAC approach for up to 100 robots at different levels of connectivity. As expected, strongly connected graphs required significantly less time to converge the weights as the graph's diameter is relatively small. This requires fewer communication rounds in Eq~\eqref{eq: weightAdaptation_EAC}, leading to quick convergence.

\begin{figure}[t]
    \centering
    \includegraphics[width=0.49\linewidth]{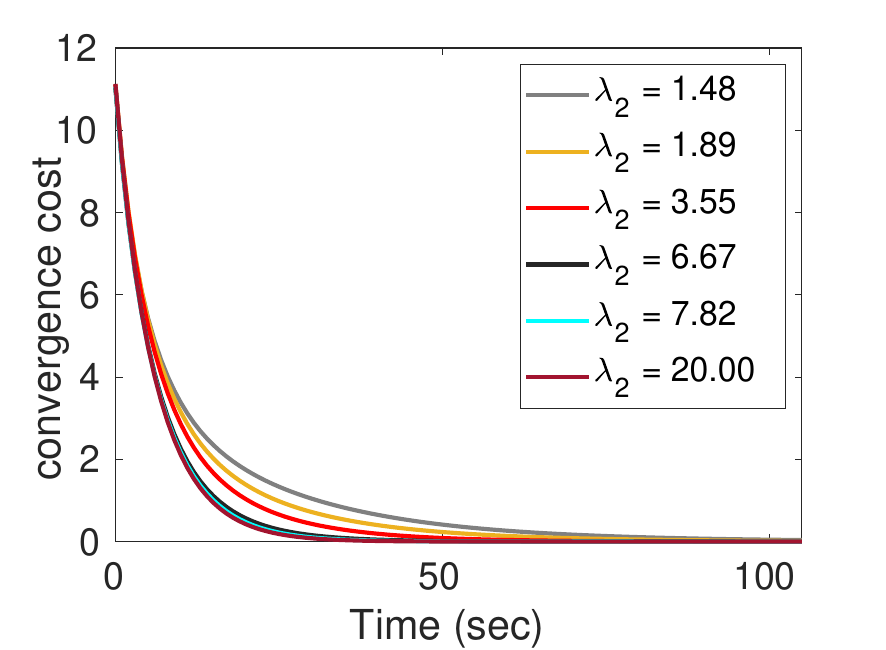}
    \includegraphics[width=0.49\linewidth]{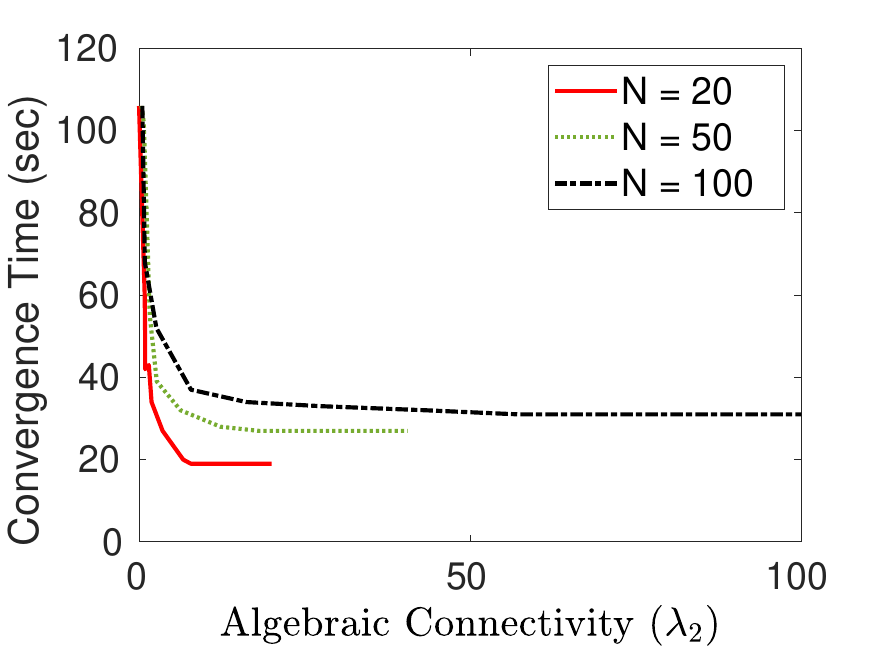}
    \vspace{-4mm}
    \caption{Results of the experiments analyzing the effect of connectivity and scalability (number of robots). The convergence cost is high for sparsely connected graphs (left), and the convergence time increases proportional to the number of robots (right), demonstrating scalability aspects.}
    \label{fig:connectivity}
    \vspace{-2mm}
\end{figure}




\begin{figure}[t]
    \centering
    \begin{tabular}{cccc}
        \textbf{\underline{Homogeneous Robots}} & & \textbf{\underline{Heterogeneous Robots}}&  \\
         \includegraphics[height=0.2\linewidth]{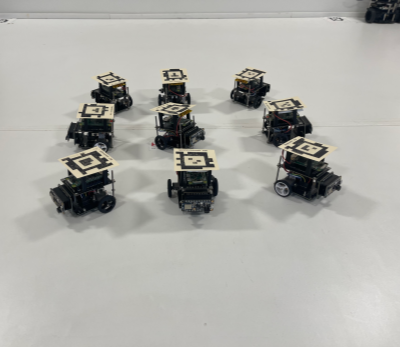} &
         \includegraphics[width=0.25\linewidth]{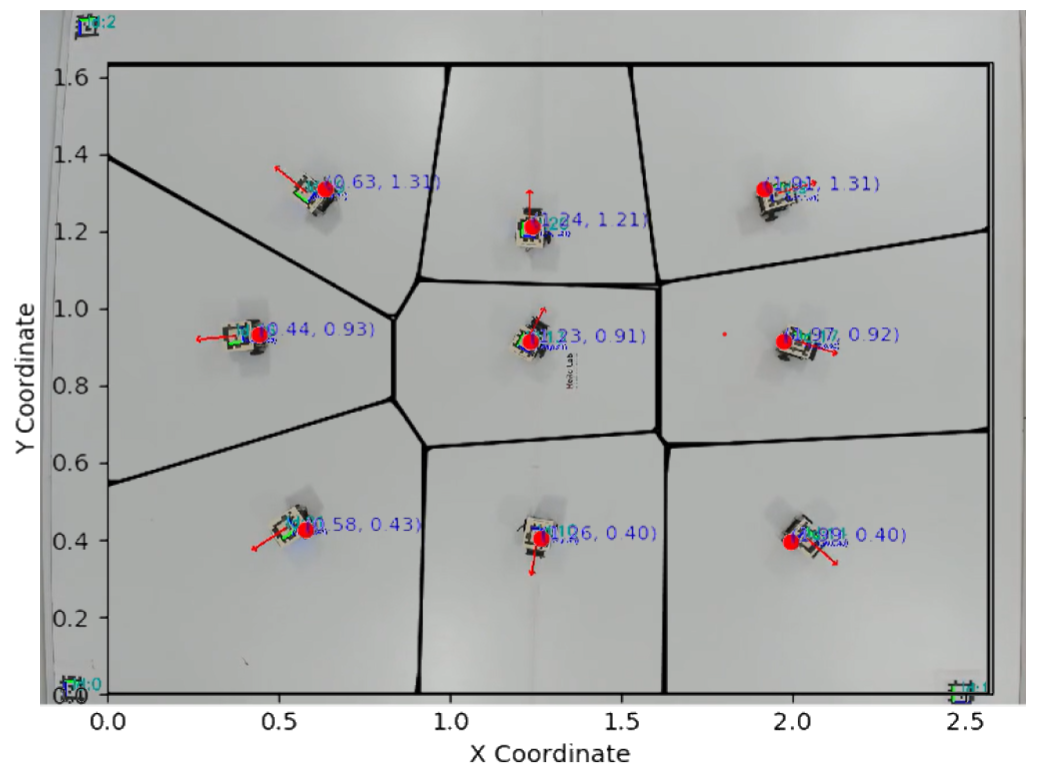} &
        \includegraphics[height=0.2\linewidth{}]{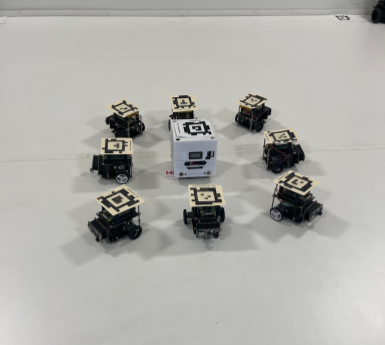} &
        \includegraphics[width=0.25\linewidth{}]{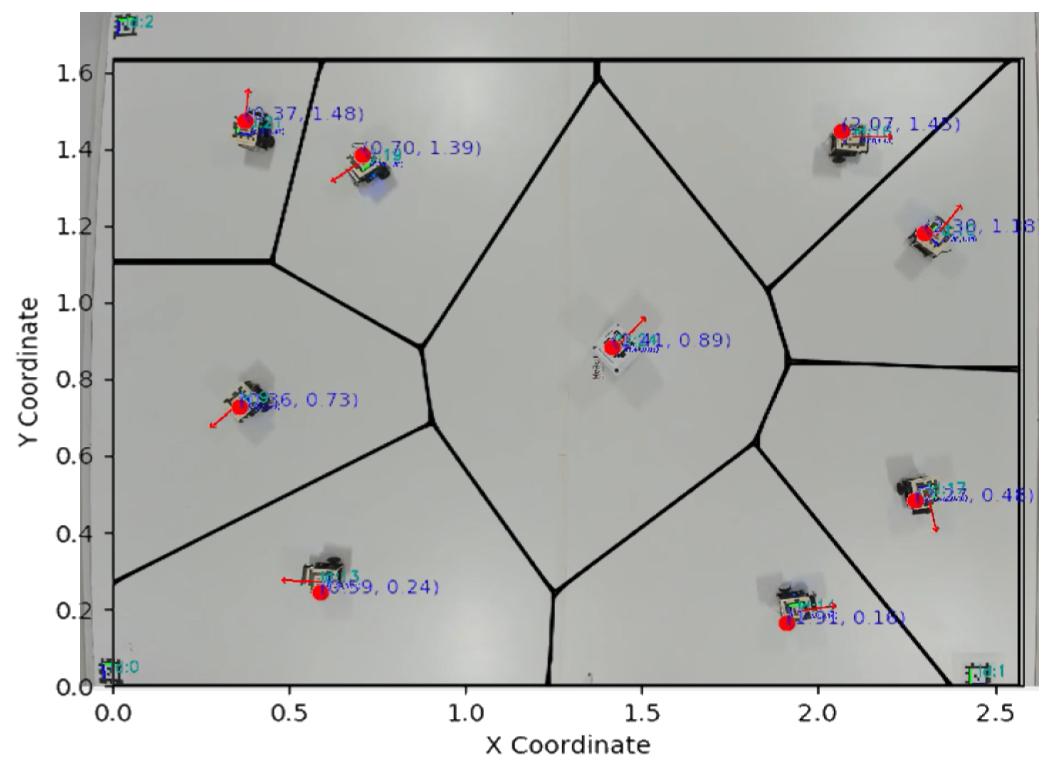} \\
    \end{tabular}
    \vspace{-4mm}
    \caption{An example outcome of running the proposed energy-aware controller on an in-house swarm robotics testbed with nine robots. The top figures show the close-up view of the robots' initial locations at the center of the workspace. In the heterogeneous case, the robot at the center has 2x higher energy capacity than the other robots. The bottom plots show their final coverage allocations, which visibly demonstrates a larger workspace for the center robot in the heterogeneous case after the convergence of coverage areas.  }
    \label{fig:heroswarm_eac}
\end{figure}

\subsection{Demonstration with Heterogeneous Swarm Robots}
Our in-house swarm robotics testbed (\(2.5 \times 2.5 m^2 \)) allows us to deploy heterogeneous types of robots in the workspace. The testbed consists of AprilTag \cite{wang2016apriltag} based position tracking and real-time robot control through ROS2 software framework \cite{macenski2022robot}. The ROS2 driver running on each robot provides real-time data on the battery voltage and multiple other sensors. As shown in Fig.~\ref{fig:heroswarm_eac} (top figures), we create two configurations with $n=9$ robots. In the homogeneous case, all robots are of the same type and have the same battery capacity and depletion rate. In the heterogeneous case, one robot is structurally larger (with $2$x battery capacity and $1.2$x depletion rate) than the other eight robots, which are of the same type. We implemented and tested our EAC approach in the swarm robot testbed with these real energy characteristics. 
Fig.~\ref{fig:heroswarm_eac} presents the results of a sample trial showing the final positions of the robots after the coverage applied in both cases. In both cases, all robots are positioned together at the center of the workspace initially, as shown in the top row of the figure. The homogeneous robot results align with the WMTC approach since similar energy characteristics would obtain the same weights for all the robots, hence assigning them almost equal area partitions. However, in the heterogeneous case, the robot weights would be adapted based on their energy characteristics, and therefore, the robot with higher battery capacity is assigned a much larger area. This effect can be observed in the right-most plot in Fig.~\ref{fig:heroswarm_eac}.
This demonstration explicates the difference in the coverage control output between a homogeneous and a heterogeneous team of robots where real energy characteristics are considered.

Additional experiments conducted on the real-world Robotarium platform \cite{wilson2021robotarium} are reported in the Appendix, where we also considered a non-uniform density function (i.e., $\phi$ in Eq~\eqref{eq:centroid-mass} is set to a function similar to the ones used in \cite{kim2022}). In addition, a supplementary video demonstrating the experiments is available\footnote{\url{https://github.com/herolab-uga/energy-aware-coverage} \\ \url{https://www.youtube.com/watch?v=HR9HCWUJz18}}.

\section{Conclusion}
Heterogeneous robots have different capabilities resulting in different energy consumption characteristics, which play a crucial role in multi-robot coverage performance.
We have proposed a novel energy-aware control law to split the workspace among $n$ robots for sensor coverage such that the weights, i.e., the areas of the robots' allocated unique sub-regions, are governed by their energy levels and the energy depletion rates. We assumed that such depletion rates might not be known a priori, and therefore, they must calculate these rates online and use that information to better partition the environment among them. Supported by theoretical results and extensive experiments in both simulations and real-world robot testbeds, we have shown that our proposed controller can allocate regions according to the robots' energy depletion rates while outperforming energy-aware controllers from the literature. In the future, we are interested in extending this energy-aware approach to informative path planning and active sensing tasks.

\section*{Appendix - Additional Experimental Data}

\begin{figure*}[t]
    \centering
    \begin{tabular}{ccccc}
        \textbf{\underline{Initial}} & \textbf{\underline{EAC}} & \textbf{\underline{ATC}} &
        \textbf{\underline{WMTC}} & \textbf{\underline{PBC}} \\
        \hspace{-0.2in}\includegraphics[width=\figwidth\columnwidth]{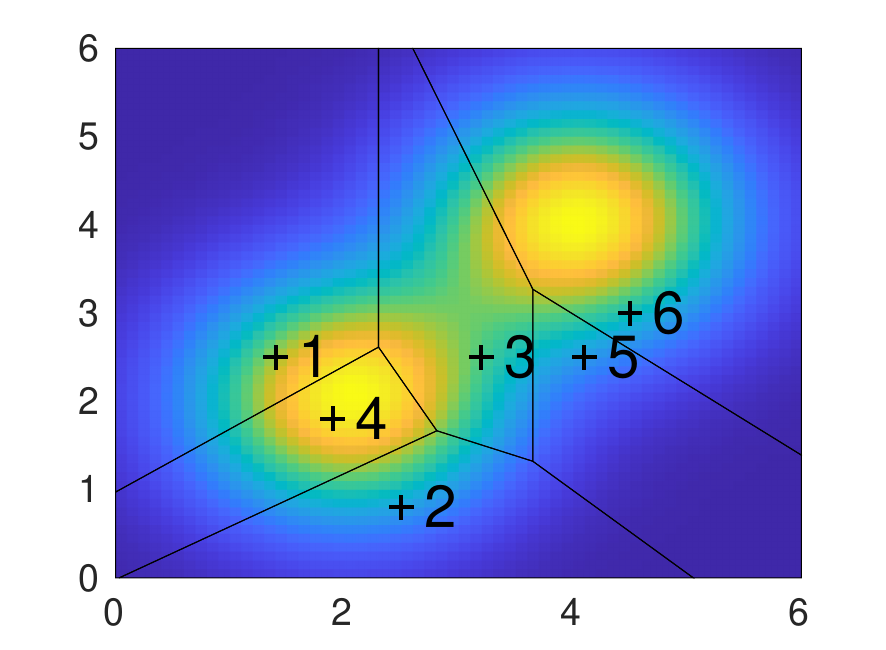} &
        \hspace{-0.2in}\includegraphics[width=\figwidth\columnwidth]{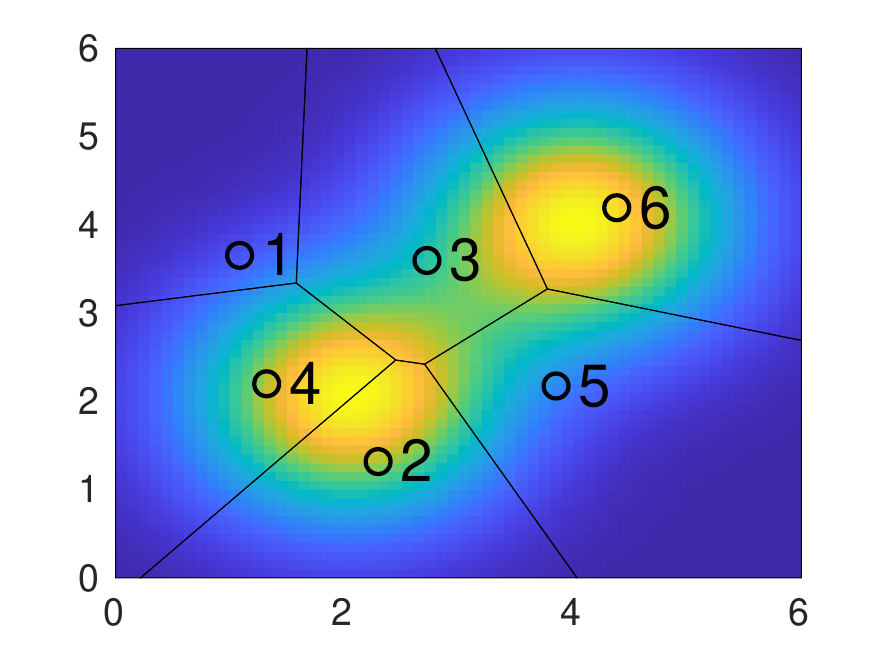} &
        \hspace{-0.2in}\includegraphics[width=\figwidth\columnwidth]{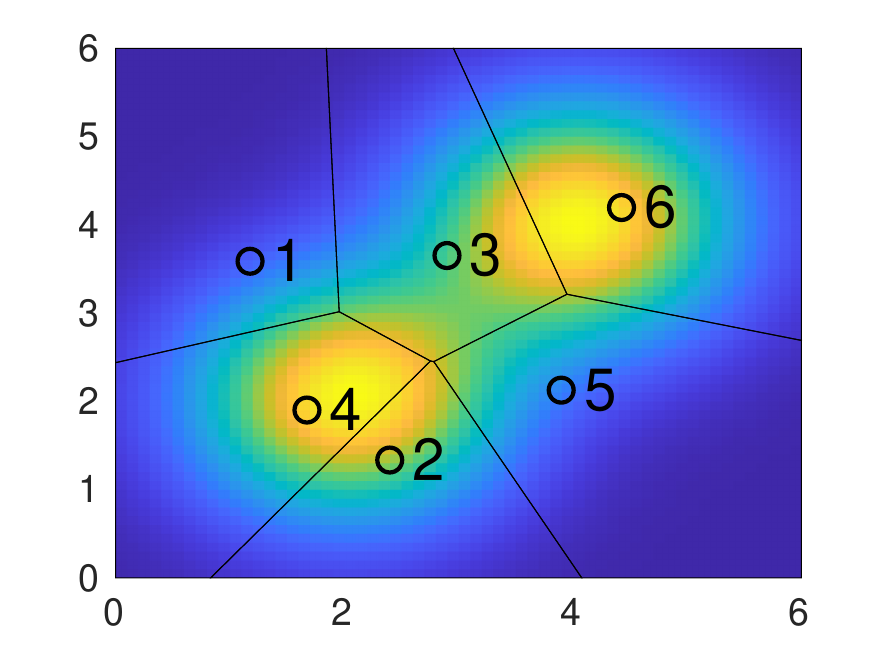 } &
        \hspace{-0.2in}\includegraphics[width=\figwidth\columnwidth]{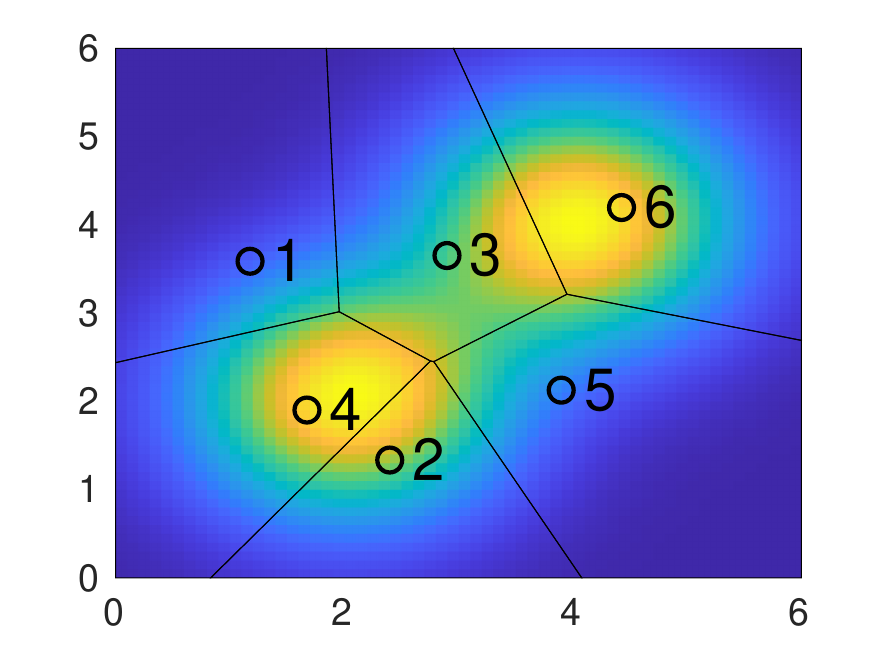} &
        \hspace{-0.2in}\includegraphics[width=\figwidth\columnwidth]{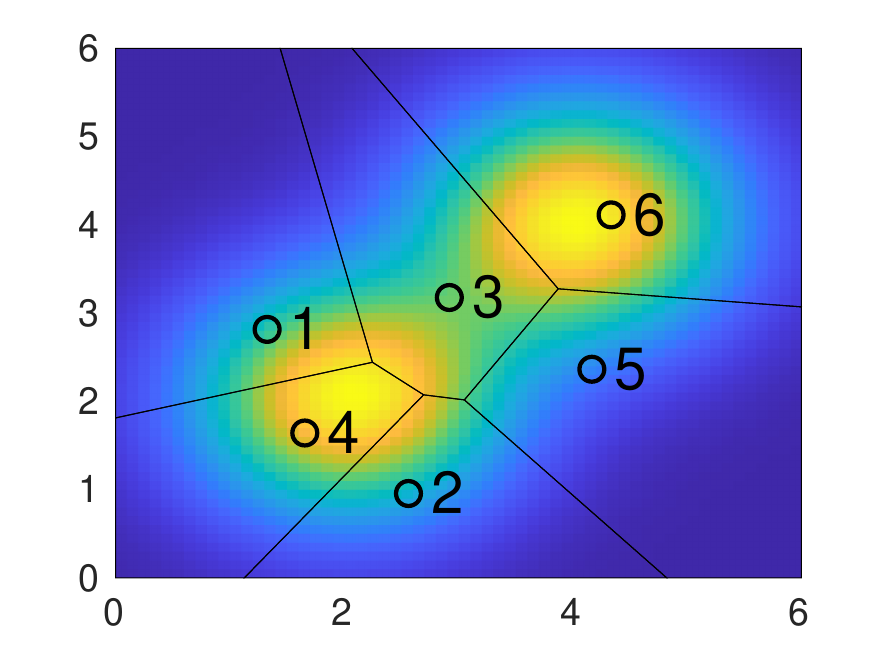} \\
        \hspace{-0.2in}\includegraphics[width=\figwidth\columnwidth]{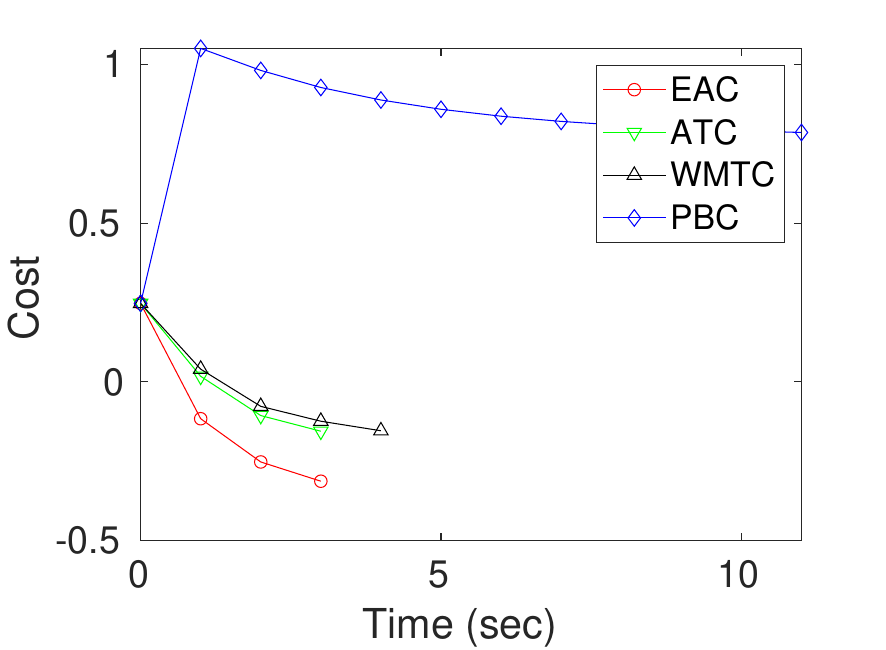} &        
        \hspace{-0.2in}\includegraphics[width=\figwidth\columnwidth]{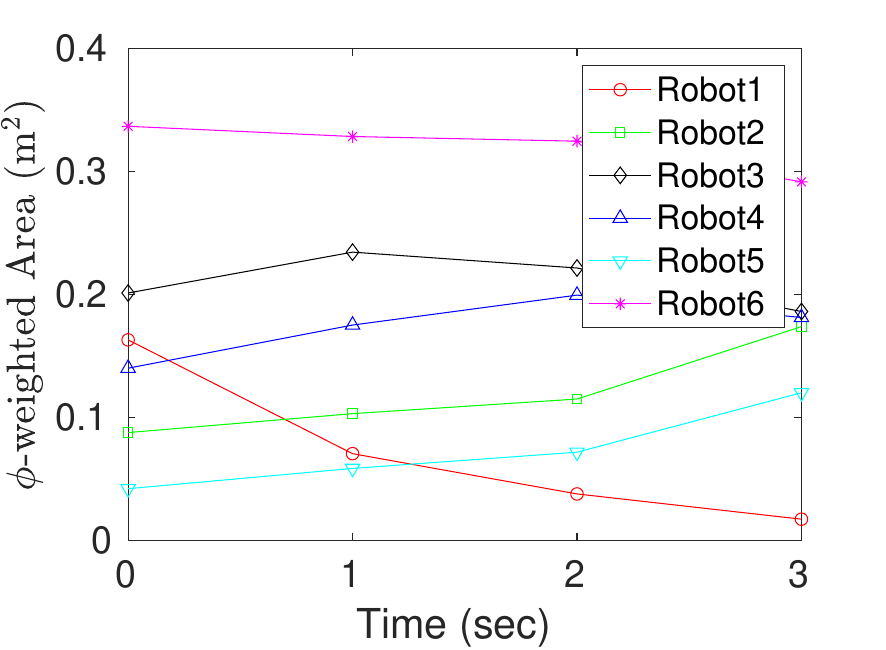} &
        \hspace{-0.2in}\includegraphics[width=\figwidth\columnwidth]{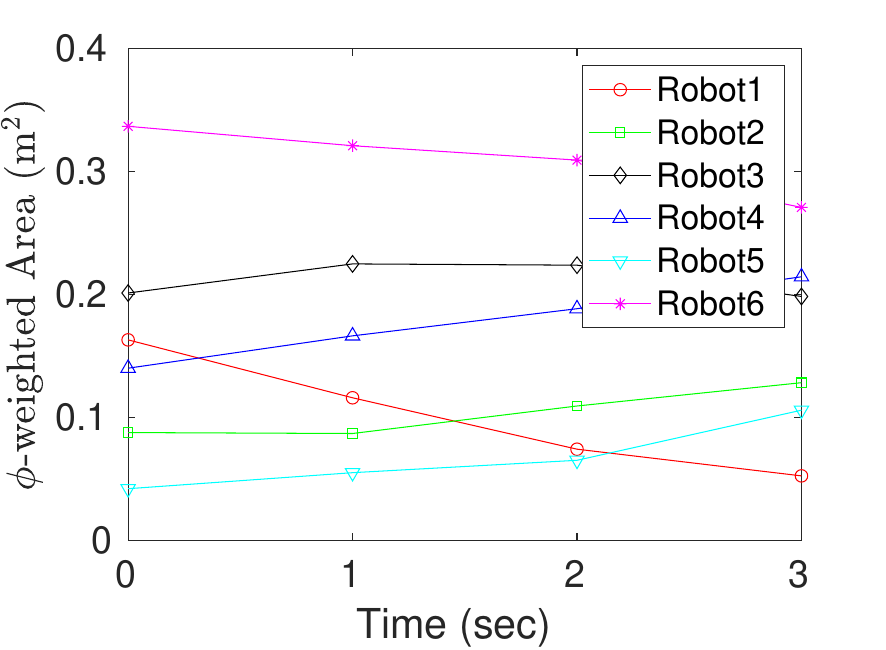} &
        \hspace{-0.2in}\includegraphics[width=\figwidth\columnwidth]{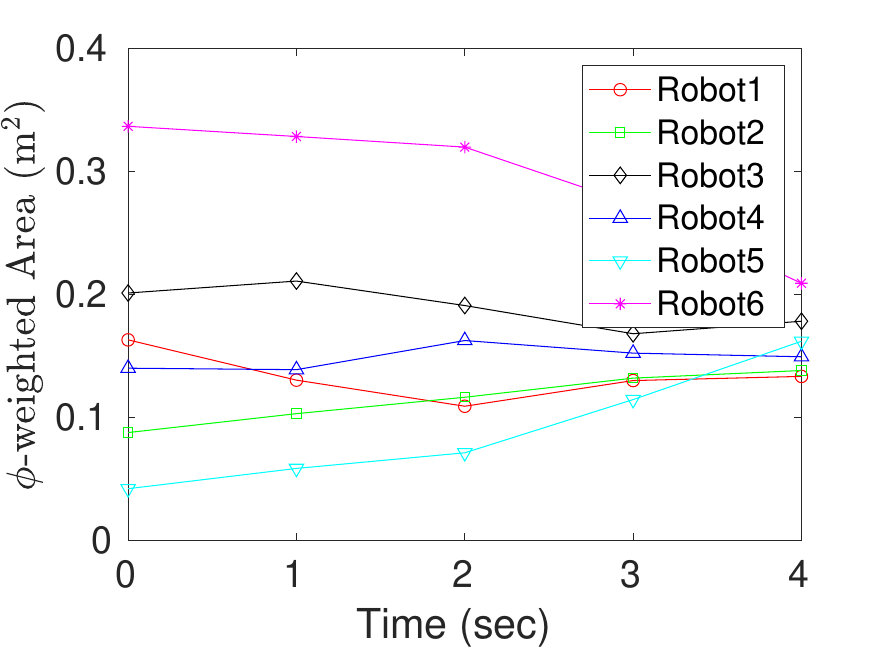} &
        \hspace{-0.2in}\includegraphics[width=\figwidth\columnwidth]{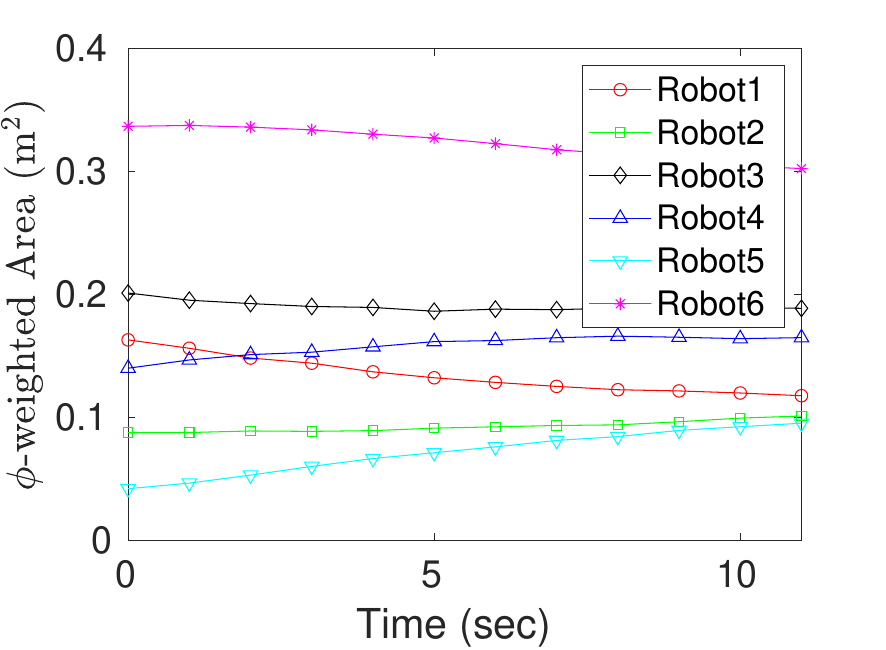}
    
    \end{tabular}
    \vspace{-4mm}
    \caption{Results of the additional experiments (non-uniform density case considering a bi-modal density function). Here, all robots have the same initial energy capacity, but robot 1 has a higher depletion rate). The WMTC coverage treats all robots equally, assigning them based on the region of importance. However, the EAC method correctly assigns robot 1 to a smaller area compared to other methods due to the depletion rate difference as expected. }
    \label{fig:results_E7}
\end{figure*}

\subsection*{Experimental Scenario with Non-uniform Density Function}
\label{sec:density-sim}
Following \cite{kim2022}, we test the proposed energy-aware coverage controller in a non-uniform density environment. A density function ($\phi(q)$ in Eq.~\eqref{eq:centroid-mass}) represents the importance of the location, e.g., representing the concentration of events or phenomena in different regions. A high density implies more effort (or time) needed to survey that point of interest, and hence the robots in high-density regions are allocated less area in the coverage. We simulate such a scenario by employing an identical density function as presented in \cite{kim2022}, with the following parameter adjustments: $\mu_1 = \begin{bmatrix} 2 \ 2 \end{bmatrix}$, $\mu_2 = \begin{bmatrix} 4 \ 4 \end{bmatrix}$, and $\Sigma = 0.9I$.
In this setting, $r_1$ has higher $\beta$ ($\beta_1=10$) i.e, its $\dot{E}_i(t)=5 $ while all other robots have  $\dot{E}_i(t)=1.4$, however the initial energy and $\alpha$ of all robots remain the same. 
The results are presented in Fig. \ref{fig:results_E7}. Even with a non-uniform density function, the EAC effectively adjusted the weights of robots, resulting in an approximate $50\%$ decrease in cost ($-0.31$) as compared to WMTC ($-0.15$) and ATC ($-0.16$). Furthermore, the EAC demonstrated a remarkable achievement by attaining $140\%$ lower cost than PBC ($0.78$).

\subsection*{Demonstration in the Robotarium Hardware Testbed}

 \begin{figure*}[t]
     \centering
     \begin{tabular}{ccccc}
         \textbf{\underline{Initial}} & \textbf{\underline{EAC}} & \textbf{\underline{ATC}} &
         \textbf{\underline{WMTC}} & \textbf{\underline{PBC}} \\
         \hspace{-0.05in}\includegraphics[width=0.19\textwidth]{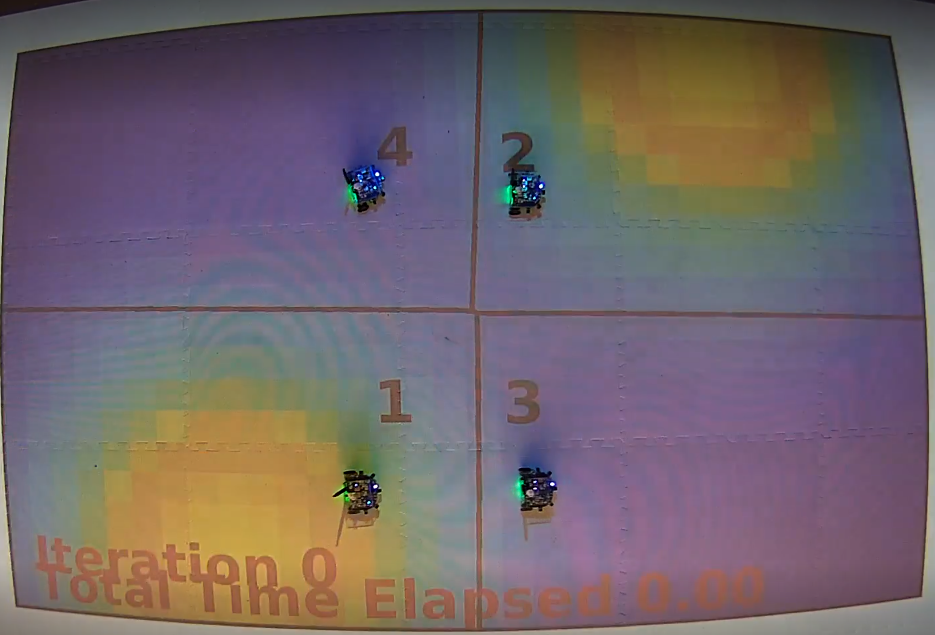} &
         \includegraphics[width=0.19\textwidth]{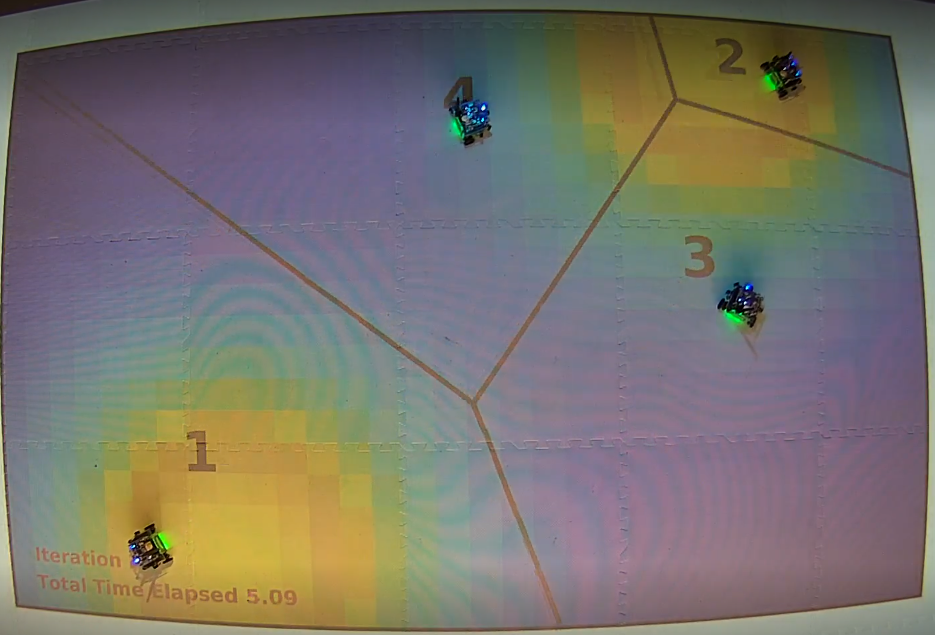} &
         \includegraphics[width=0.19\textwidth]{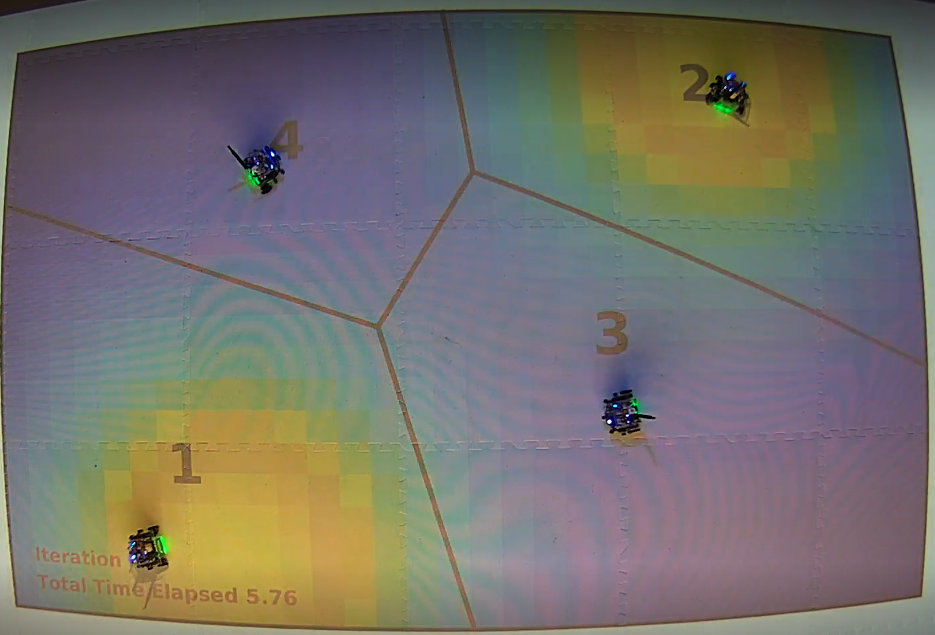 } &
         \includegraphics[width=0.19\textwidth]{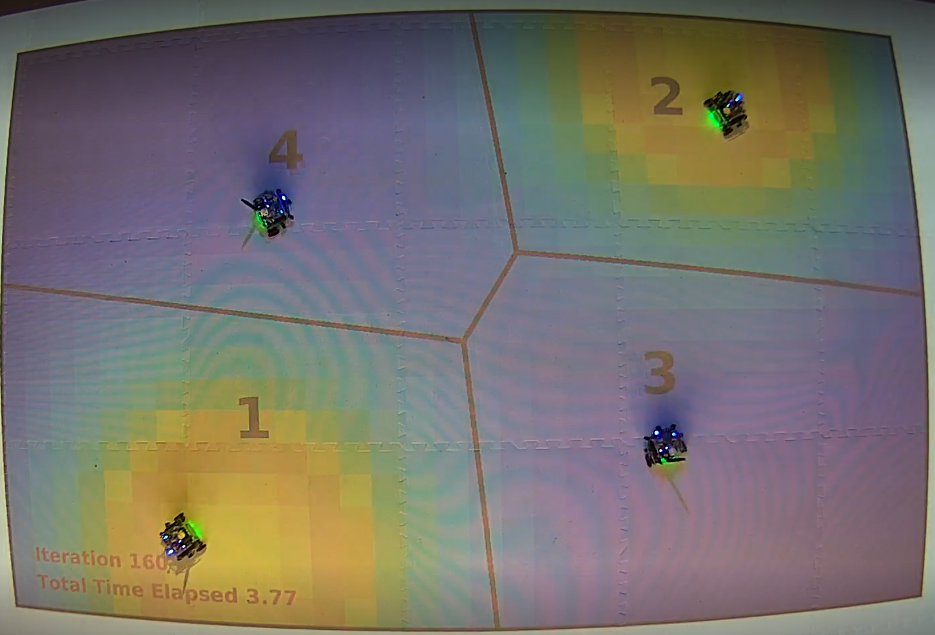} &
         \includegraphics[width=0.19\textwidth]{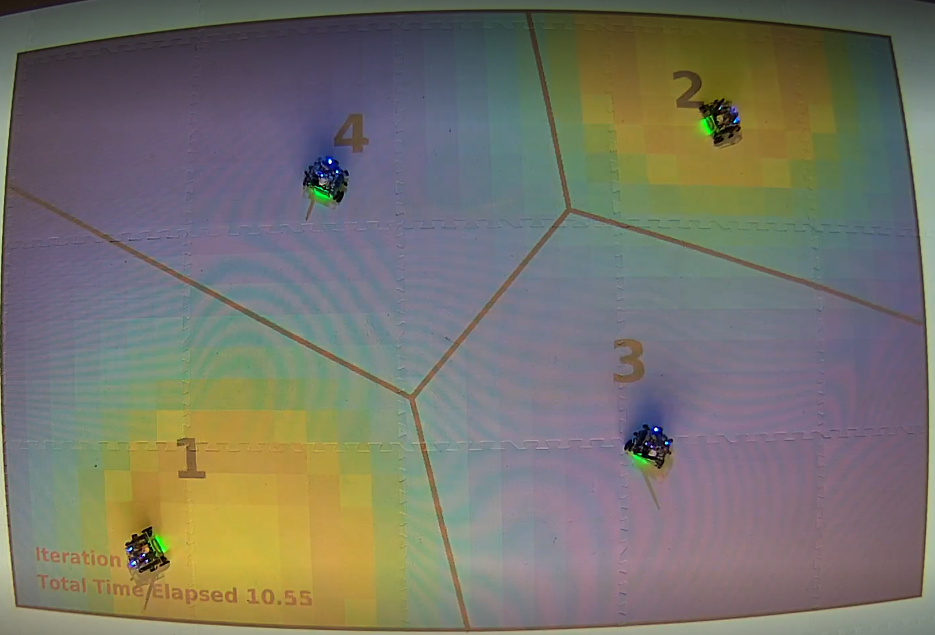} \\
     \end{tabular}
     \vspace{-4mm}
     \caption{Real-robot experiment in Robotarium with a bi-modal density function. The robots have similar energy characteristics, but robot $2$ (top right) has low initial energy and is close to a source. Both these effects assign significantly less weight (and coverage) to robot $2$ in EAC than other controllers. }
     \label{fig:results_E8}
 \end{figure*}
 
To validate the performance of the proposed controller in the real world, we showcase experiments in two different robot testbeds. The Robotarium's hardware testbed \cite{wilson2021robotarium} allows us to verify the control accuracy of robots with simulated energy characteristics.

In this demonstration, we used four robots, and robot $2$ has a lower initial energy of $E_{init} = 70$ compared to all other robots ($E_{init}= 100$), while their depletion rates are the same. 
 We also simulated a bi-modal density function $\phi(q)$ using the same density function mentioned in Scenario 5 (Sec. \ref{sec:density-sim}) with two sources closer to robots $1$ and $2$.
 We have employed an identical density function as presented in \cite{kim2022}, with parameter adjustments: $\mu_1 = \begin{bmatrix} 2 \ 2 \end{bmatrix}$, $\mu_2 = \begin{bmatrix} 4 \ 4 \end{bmatrix}$, and $\Sigma = 0.9I$.
 We expect the controller to learn the difference between available energy and repartition accordingly. 
 i.e., we expect robot $2$ to have a significantly reduced coverage area for two reasons: 1) it has a lower energy capacity, and 2) it is in a region of high importance (high density), which needs to be covered more carefully, further necessitating its weight reduction.

 The results for this experiment are presented in Fig \ref{fig:results_E8}. 
 Although ATC and PBC assigned a smaller area to robot $2$ (primarily due to the influence of the density values), EAC was more effective in allocating a significantly smaller area to it.
 More results from the simulations and the real-world experiments are included in the attached video.
As expected, the bi-modal density function has a source closer to $r_2$ and this further contributed to the reduction in the weights for $r_2$ in EAC. 
Consequently, this increased operational efficiency extends the operational time of $r_2$, thereby optimizing the overall network lifetime of the robots. 
Together with the Robotarium's hardware experiment data, we obtain more evidence that the proposed energy-aware coverage controller is useful in practical real-world applications.



\bibliographystyle{plainnat}
\bibliography{ref}

\begin{thebibliography}{38}
\providecommand{\natexlab}[1]{#1}
\providecommand{\url}[1]{\texttt{#1}}
\expandafter\ifx\csname urlstyle\endcsname\relax
  \providecommand{\doi}[1]{doi: #1}\else
  \providecommand{\doi}{doi: \begingroup \urlstyle{rm}\Url}\fi

\bibitem[Arslan and Koditschek(2016)]{arslan2016}
Omur Arslan and Daniel~E. Koditschek.
\newblock Voronoi-based coverage control of heterogeneous disk-shaped robots.
\newblock In \emph{2016 IEEE International Conference on Robotics and Automation (ICRA)}, pages 4259--4266, 2016.
\newblock \doi{10.1109/ICRA.2016.7487622}.

\bibitem[Cai et~al.(2021)Cai, Schlotfeldt, Khosoussi, Atanasov, Pappas, and How]{cai2021non}
Xiaoyi Cai, Brent Schlotfeldt, Kasra Khosoussi, Nikolay Atanasov, George~J Pappas, and Jonathan~P How.
\newblock Non-monotone energy-aware information gathering for heterogeneous robot teams.
\newblock In \emph{2021 IEEE International Conference on Robotics and Automation (ICRA)}, pages 8859--8865. IEEE, 2021.

\bibitem[Coffey and Pierson(2023)]{coffey2023}
Mela Coffey and Alyssa Pierson.
\newblock Covering dynamic demand with multi-resource heterogeneous teams.
\newblock In \emph{2023 IEEE/RSJ International Conference on Intelligent Robots and Systems (IROS)}, pages 11127--11134, 2023.
\newblock \doi{10.1109/IROS55552.2023.10342119}.

\bibitem[Cort{\'e}s(2010)]{cortes2010coverage}
Jorge Cort{\'e}s.
\newblock Coverage optimization and spatial load balancing by robotic sensor networks.
\newblock \emph{IEEE Transactions on Automatic Control}, 55\penalty0 (3):\penalty0 749--754, 2010.

\bibitem[Cortes et~al.(2004)Cortes, Martinez, Karatas, and Bullo]{cortes2004coverage}
Jorge Cortes, Sonia Martinez, Timur Karatas, and Francesco Bullo.
\newblock Coverage control for mobile sensing networks.
\newblock \emph{IEEE Transactions on robotics and Automation}, 20\penalty0 (2):\penalty0 243--255, 2004.

\bibitem[Derenick et~al.(2011)Derenick, Michael, and Kumar]{derenick2011energy}
Jason Derenick, Nathan Michael, and Vijay Kumar.
\newblock Energy-aware coverage control with docking for robot teams.
\newblock In \emph{2011 IEEE/RSJ International Conference on Intelligent Robots and Systems}, pages 3667--3672, 2011.

\bibitem[Du et~al.(1999)Du, Faber, and Gunzburger]{du1999centroidal}
Qiang Du, Vance Faber, and Max Gunzburger.
\newblock Centroidal voronoi tessellations: Applications and algorithms.
\newblock \emph{SIAM review}, 41\penalty0 (4):\penalty0 637--676, 1999.

\bibitem[Duca and Bugeja(2020)]{duca2020multi}
Rachael~N Duca and Marvin~K Bugeja.
\newblock Multi-robot energy-aware coverage control in the presence of time-varying importance regions.
\newblock \emph{IFAC-PapersOnLine}, 53\penalty0 (2):\penalty0 9676--9681, 2020.

\bibitem[Dutta et~al.(2020)Dutta, Bhattacharya, Kreidl, Ghosh, and Dasgupta]{dutta2020multi}
Ayan Dutta, Amitabh Bhattacharya, O~Patrick Kreidl, Anirban Ghosh, and Prithviraj Dasgupta.
\newblock Multi-robot informative path planning in unknown environments through continuous region partitioning.
\newblock \emph{International Journal of Advanced Robotic Systems}, 17\penalty0 (6):\penalty0 1729881420970461, 2020.

\bibitem[Gosrich et~al.(2022)Gosrich, Mayya, Li, Paulos, Yim, Ribeiro, and Kumar]{gosrich2022}
Walker Gosrich, Siddharth Mayya, Rebecca Li, James Paulos, Mark Yim, Alejandro Ribeiro, and Vijay Kumar.
\newblock Coverage control in multi-robot systems via graph neural networks.
\newblock In \emph{2022 International Conference on Robotics and Automation (ICRA)}, pages 8787--8793, 2022.
\newblock \doi{10.1109/ICRA46639.2022.9811854}.

\bibitem[Kim et~al.(2022)Kim, Santos, Guerrero-Bonilla, Yezzi, and Egerstedt]{kim2022}
Soobum Kim, María Santos, Luis Guerrero-Bonilla, Anthony Yezzi, and Magnus Egerstedt.
\newblock Coverage control of mobile robots with different maximum speeds for time-sensitive applications.
\newblock \emph{IEEE Robotics and Automation Letters}, 7\penalty0 (2):\penalty0 3001--3007, 2022.
\newblock \doi{10.1109/LRA.2022.3146593}.

\bibitem[Kwok and Martinez(2007)]{kwok2007energy}
Andrew Kwok and Sonia Martinez.
\newblock Energy-balancing cooperative strategies for sensor deployment.
\newblock In \emph{2007 46th IEEE Conference on Decision and Control}, pages 6136--6141. IEEE, 2007.

\bibitem[La~Salle(1976)]{la1976stability}
Joseph~P La~Salle.
\newblock \emph{The stability of dynamical systems}.
\newblock SIAM, 1976.

\bibitem[Latif et~al.(2021)Latif, Gui, Munir, and Parasuraman]{latif2021energy}
Ehsan Latif, Yikang Gui, Aiman Munir, and Ramviyas Parasuraman.
\newblock Energy-aware multi-robot task allocation in persistent tasks.
\newblock \emph{arXiv preprint arXiv:2112.15282}, 2021.

\bibitem[Laventall and Cort{\'e}s(2009)]{laventall2009coverage}
Katie Laventall and Jorge Cort{\'e}s.
\newblock Coverage control by multi-robot networks with limited-range anisotropic sensory.
\newblock \emph{International Journal of Control}, 82\penalty0 (6):\penalty0 1113--1121, 2009.

\bibitem[Lin et~al.(2018)Lin, Yel, and Bezzo]{lin2018energy}
Tony~X Lin, Esen Yel, and Nicola Bezzo.
\newblock Energy-aware persistent control of heterogeneous robotic systems.
\newblock In \emph{2018 Annual American Control Conference (ACC)}, pages 2782--2787. IEEE, 2018.

\bibitem[Macenski et~al.(2022)Macenski, Foote, Gerkey, Lalancette, and Woodall]{macenski2022robot}
Steven Macenski, Tully Foote, Brian Gerkey, Chris Lalancette, and William Woodall.
\newblock Robot operating system 2: Design, architecture, and uses in the wild.
\newblock \emph{Science Robotics}, 7\penalty0 (66):\penalty0 eabm6074, 2022.

\bibitem[Marier et~al.(2013)Marier, Rabbath, and Léchevin]{Marier2013}
Jean-Samuel Marier, Camille~Alain Rabbath, and Nicolas Léchevin.
\newblock Health-aware coverage control with application to a team of small uavs.
\newblock \emph{IEEE Transactions on Control Systems Technology}, 21\penalty0 (5):\penalty0 1719--1730, 2013.
\newblock \doi{10.1109/TCST.2012.2208113}.

\bibitem[Mei and Lu(2004)]{mei2004energy}
Yongguo Mei and Yung-Hsiang Lu.
\newblock Energy-efficient motion planning for mobile robots.
\newblock In \emph{IEEE International Conference on Robotics and Automation, 2004. Proceedings. ICRA'04. 2004}, volume~5, pages 4344--4349. IEEE, 2004.

\bibitem[Moarref and Rodrigues(2014)]{moarref2014optimal}
Miad Moarref and Luis Rodrigues.
\newblock An optimal control approach to decentralized energy-efficient coverage problems.
\newblock \emph{IFAC Proceedings Volumes}, 47\penalty0 (3):\penalty0 6038--6043, 2014.

\bibitem[Munir and Parasuraman(2023)]{munir2023exploration}
Aiman Munir and Ramviyas Parasuraman.
\newblock Exploration--exploitation tradeoff in the adaptive information sampling of unknown spatial fields with mobile robots.
\newblock \emph{Sensors}, 23\penalty0 (23):\penalty0 9600, 2023.

\bibitem[Parasuraman and Min(2019)]{parasuraman2019consensus}
Ramviyas Parasuraman and Byung-Cheol Min.
\newblock Consensus control of distributed robots using direction of arrival of wireless signals.
\newblock In \emph{Distributed Autonomous Robotic Systems: The 14th International Symposium}, pages 17--34. Springer, 2019.

\bibitem[Parasuraman et~al.(2014)Parasuraman, Kershaw, Pagala, and Ferre]{parasuraman2014model}
Ramviyas Parasuraman, Keith Kershaw, Prithvi Pagala, and Manuel Ferre.
\newblock Model based on-line energy prediction system for semi-autonomous mobile robots.
\newblock In \emph{2014 5th International Conference on Intelligent Systems, Modelling and Simulation}, pages 411--416. IEEE, 2014.

\bibitem[Pierson and Schwager(2016)]{pierson2016adaptivetrustweighting}
Alyssa Pierson and Mac Schwager.
\newblock Adaptive inter-robot trust for robust multi-robot sensor coverage.
\newblock In \emph{Robotics Research}, pages 167--183. Springer, 2016.

\bibitem[Pierson et~al.(2015)Pierson, Figueiredo, Pimenta, and Schwager]{pierson2015adapting}
Alyssa Pierson, Lucas~C Figueiredo, Luciano~CA Pimenta, and Mac Schwager.
\newblock Adapting to performance variations in multi-robot coverage.
\newblock In \emph{2015 IEEE international conference on robotics and automation (ICRA)}, pages 415--420. IEEE, 2015.

\bibitem[Pimenta et~al.(2008)Pimenta, Kumar, Mesquita, and Pereira]{pimenta2008sensing}
Luciano~CA Pimenta, Vijay Kumar, Renato~C Mesquita, and Guilherme~AS Pereira.
\newblock Sensing and coverage for a network of heterogeneous robots.
\newblock In \emph{2008 47th IEEE conference on decision and control}, pages 3947--3952, 2008.

\bibitem[Rudolph et~al.(2021)Rudolph, Wilson, and Egerstedt]{rudolph2021range}
Max Rudolph, Sean Wilson, and Magnus Egerstedt.
\newblock Range limited coverage control using air-ground multi-robot teams.
\newblock In \emph{2021 IEEE International Conference on Robotics and Automation (ICRA)}, pages 3525--3530. IEEE, 2021.

\bibitem[Santos and Egerstedt(2018)]{santos2018coverage}
Mar{\'\i}a Santos and Magnus Egerstedt.
\newblock Coverage control for multi-robot teams with heterogeneous sensing capabilities using limited communications.
\newblock In \emph{2018 IEEE/RSJ International Conference on Intelligent Robots and Systems (IROS)}, pages 5313--5319. IEEE, 2018.

\bibitem[Santos et~al.(2021)Santos, Madhushani, Benevento, and Leonard]{santos2021multi}
Maria Santos, Udari Madhushani, Alessia Benevento, and Naomi~Ehrich Leonard.
\newblock Multi-robot learning and coverage of unknown spatial fields.
\newblock In \emph{2021 International Symposium on Multi-Robot and Multi-Agent Systems (MRS)}, pages 137--145. IEEE, 2021.

\bibitem[Setter and Egerstedt(2016)]{setter2016energy}
Tina Setter and Magnus Egerstedt.
\newblock Energy-constrained coordination of multi-robot teams.
\newblock \emph{IEEE Transactions on Control Systems Technology}, 25\penalty0 (4):\penalty0 1257--1263, 2016.

\bibitem[Shi et~al.(2020)Shi, Wang, Zheng, Zhang, Yi, Luo, and Sycara]{shi2020}
Yunfei Shi, Ning Wang, Jianmin Zheng, Yang Zhang, Sha Yi, Wenhao Luo, and Katia Sycara.
\newblock Adaptive informative sampling with environment partitioning for heterogeneous multi-robot systems.
\newblock In \emph{2020 IEEE/RSJ International Conference on Intelligent Robots and Systems (IROS)}, pages 11718--11723, 2020.
\newblock \doi{10.1109/IROS45743.2020.9341711}.

\bibitem[Tokekar et~al.(2016)Tokekar, Vander~Hook, Mulla, and Isler]{tokekar2016sensor}
Pratap Tokekar, Joshua Vander~Hook, David Mulla, and Volkan Isler.
\newblock Sensor planning for a symbiotic uav and ugv system for precision agriculture.
\newblock \emph{IEEE Transactions on Robotics}, 32\penalty0 (6):\penalty0 1498--1511, 2016.

\bibitem[Voronoi(1908)]{voronoi1908news}
Georges Voronoi.
\newblock New applications of continuous parameters {\`e}to the theory of quadratic forms. second memory. research on primitive parallel{\`e}dres.
\newblock \emph{Journal f{\"u}r die reine und angewandte Mathematik (Crelles Journal)}, 1908\penalty0 (134):\penalty0 198--287, 1908.

\bibitem[Wang and Olson(2016)]{wang2016apriltag}
John Wang and Edwin Olson.
\newblock Apriltag 2: Efficient and robust fiducial detection.
\newblock In \emph{2016 IEEE/RSJ International Conference on Intelligent Robots and Systems (IROS)}, pages 4193--4198. IEEE, 2016.

\bibitem[Wei et~al.(2021)Wei, McDonald, and Srivastava]{wei2021multi}
Lai Wei, Andrew McDonald, and Vaibhav Srivastava.
\newblock Multi-robot gaussian process estimation and coverage: Deterministic sequencing algorithm and regret analysis.
\newblock In \emph{2021 IEEE International Conference on Robotics and Automation (ICRA)}, pages 9080--9085. IEEE, 2021.

\bibitem[Wei and Isler(2020)]{wei2020energy}
Minghan Wei and Volkan Isler.
\newblock Energy-efficient path planning for ground robots by and combining air and ground measurements.
\newblock In \emph{Conference on Robot Learning}, pages 766--775. PMLR, 2020.

\bibitem[Wilson et~al.(2021)Wilson, Glotfelter, Mayya, Notomista, Emam, Cai, and Egerstedt]{wilson2021robotarium}
Sean Wilson, Paul Glotfelter, Siddharth Mayya, Gennaro Notomista, Yousef Emam, Xiaoyi Cai, and Magnus Egerstedt.
\newblock The robotarium: Automation of a remotely accessible, multi-robot testbed.
\newblock \emph{IEEE Robotics and Automation Letters}, 6\penalty0 (2):\penalty0 2922--2929, 2021.

\bibitem[Yu et~al.(2019)Yu, O’Kane, and Tokekar]{yu2019coverage}
Kevin Yu, Jason~M O’Kane, and Pratap Tokekar.
\newblock Coverage of an environment using energy-constrained unmanned aerial vehicles.
\newblock In \emph{2019 international conference on robotics and automation (ICRA)}, pages 3259--3265. IEEE, 2019.

\end{thebibliography}

\end{document}